\let\NAT@parse\undefined
\newtheorem{definition}{Definition}
\newtheorem{theorem}{Theorem}
\def\BibTeX{{\rm B\kern-.05em{\sc i\kern-.025em b}\kern-.08em
    T\kern-.1667em\lower.7ex\hbox{E}\kern-.125emX}}
\begin{document}

\title{Structure-Preference Enabled Graph Embedding Generation under Differential Privacy}

\author{
    \IEEEauthorblockN{Sen Zhang, Qingqing Ye, Haibo Hu$^{\ast}$\thanks{$^\ast$Corresponding author.}}
    \IEEEauthorblockA{The Hong Kong Polytechnic University, Hong Kong}
    \IEEEauthorblockA{\{senzhang, qqing.ye, haibo.hu\}@polyu.edu.hk}
}


\maketitle

\begin{abstract}
Graph embedding generation techniques aim to learn low-dimensional vectors for each node in a graph and have recently gained increasing research attention. Publishing low-dimensional node vectors enables various graph analysis tasks, such as structural equivalence and link prediction. Yet, improper publication opens a backdoor to malicious attackers, who can infer sensitive information of individuals from the low-dimensional node vectors. Existing methods tackle this issue by developing deep graph learning models with differential privacy (DP). However, they often suffer from large noise injections and cannot provide structural preferences consistent with mining objectives. Recently, skip-gram based graph embedding generation techniques are widely used due to their ability to extract customizable structures. Based on skip-gram, we present SE-PrivGEmb, a structure-preference enabled graph embedding generation under DP. For arbitrary structure preferences, we design a unified noise tolerance mechanism via perturbing non-zero vectors. This mechanism mitigates utility degradation caused by high sensitivity. By carefully designing negative sampling probabilities in skip-gram, we theoretically demonstrate that skip-gram can preserve arbitrary proximities, which quantify structural features in graphs. Extensive experiments show that our method outperforms existing state-of-the-art methods under structural equivalence and link prediction tasks.
\end{abstract}


\begin{IEEEkeywords}
Structure preference, Differential privacy, Graph embedding generation
\end{IEEEkeywords}

\section{Introduction}\label{Intro}
In recent years, graph embedding generation has gained significant research attention due to its ability to represent nodes with low-dimensional vectors while preserving the inherent properties and structure of the graph. The publication of low-dimensional node vectors facilitates a broad spectrum of graph analysis tasks, including structural equivalence and link prediction. However, improper publication creates opportunities for malicious attackers to potentially infer sensitive individual information. Therefore, it is crucial to integrate privacy-preserving techniques into graph embedding generation methods before making them publicly available. 

Differential privacy (DP) is a well-studied statistical privacy model known for its rigorous mathematical framework. 
One common technique for achieving differentially private training is the combination of noisy Stochastic Gradient Descent (SGD) and advanced composition theorems such as Moments Accountant (MA)~\cite{abadi2016deep}. 
This combination, known as DPSGD, has been widely studied in recent years for publishing low-dimensional node vectors, as the advanced composition theorems can effectively manage the problem of excessive splitting of the privacy budget during optimization.
For instance, Yang \emph{et al.}~\cite{yang2020secure} privately publish low-dimensional node vectors by extending the MA mechanism and applying it to graph models based on Generative Adversarial Networks (GANs) and Variational Autoencoders (VAEs). Another line of research~\cite{olatunji2021releasing, daigavane2021node, zhang2022towards, sajadmanesh2023gap, sajadmanesh2023progap, xiang2023preserving} explores graph embedding generation by applying DPSGD to Graph Neural Networks (GNNs). Despite their success, DPSGD is inherently suitable for structured data with well-defined individual gradients, while scaling poorly for graph learning models. The main reason is that individual examples in a graph are no longer independently computed for their gradients. As a result, \emph{existing methods typically suffer from large noise caused by high sensitivity}, making it difficult to strike a balance between privacy and utility. Additionally, \emph{none of existing methods can offer structure-preference settings}. Setting preferences is essential for extracting specific graph structures that align with mining objectives, improve predictive accuracy, and yield meaningful insights.

To address these issues, we turn to the skip-gram model~\cite{perozzi2014deepwalk, tang2015line, tang2015pte, grover2016node2vec}, an advanced graph embedding generation technique known for its capability for customizable structure extraction. 
Building on skip-gram, we propose SE-PrivGEmb, a novel approach for achieving differentially private graph embedding generation while supporting structure-preference settings. Achieving this target faces two main challenges: 

\begin{itemize}
    \item \emph{High Sensitivity.} Different structure-preference settings will result in varying sensitivities. The maximum sensitivity for arbitrary preference settings can be proportional to the size of the batch sampling, which leads to large noise injection and poor data utility.  
    \item \emph{Theoretical Analysis.} Previous skip-gram based methods~\cite{perozzi2014deepwalk, tang2015line, tang2015pte, grover2016node2vec, du2018dynamic} only preserve the structural feature related to node degree. They lack theoretical analysis of exactly what relationships they preserve in their embedding space, which results in weak interpretability for structural preferences in graph embedding generation.
\end{itemize}

Inspired by the one-hot encoding used in skip-gram, which ensures that the gradient (typically represented as a matrix) updates only partial vectors, this provides a new perspective for addressing high sensitivity. To tackle the first challenge, we conduct an in-depth analysis of skip-gram optimization, and design a unified noise tolerance mechanism by perturbing non-zero vectors to accommodate arbitrary structure preferences. This mechanism effectively mitigates the utility degradation caused by high sensitivity. 
To address the second challenge, we carefully design the negative sampling probability in skip-gram and theoretically prove that skip-gram can preserve arbitrary node proximities (which quantify structural features). Through formal privacy analysis, we demonstrate that the low-dimensional node vectors generated by SE-PrivGEmb satisfy node-level Rényi Differential Privacy (RDP).

The main contributions can be summarized as follows:
\begin{itemize}
\item We present a novel method for achieving differentially private graph embedding generation, named SE-PrivGEmb. To our best knowledge, it is the \emph{first} method that supports structure-preference settings while satisfying node-level RDP and preserving high data utility.
\item We deeply analyze the optimization of skip-gram and design a unified noise tolerance mechanism via perturbing non-zero vectors to accommodate arbitrary structure preferences. This mechanism can effectively mitigate the utility degradation caused by high sensitivity.
\item We theoretically prove that the skip-gram can preserve arbitrary node proximities by carefully designing its negative sampling probabilities. 
This ensures that one can achieve structural preferences that align with the desired mining objectives.
\item Extensive experimental results on several large-scale networks (e.g., social networks, citation networks) demonstrate that our proposed method outperforms various state-of-the-art methods across structural equivalence and link prediction tasks.
\end{itemize}

The outline of the paper is as follows. Section~\ref{Preliminary} introduces the preliminaries. In Section~\ref{sec:ProDes}, we define the problem and give a first-cut solution. Section~\ref{Alg_Section} explains our proposed SE-PrivGEmb, and its privacy and time complexity are discussed in Section~\ref{subsec:priv_complex}. The experimental results are presented in Section~\ref{Exp_Section}, while related work is reviewed in Section~\ref{Related_work}. Finally, our research findings are summarized in Section~\ref{ConcluSection}.

\section{Preliminaries}\label{Preliminary}
In this section, we provide an overview of graph embedding generation, differential privacy, DPSGD, and node proximity, highlighting their key concepts and important properties. We also summarize the notations commonly used in this paper in Table~\ref{SymbolTable}.

\begin{table}[htb]
  \centering
  \begin{threeparttable}
      \caption{Frequently used symbols}\label{SymbolTable}
      \begin{tabular}{l|l}
        \hline   
        Symbol & Description \\
        \hline   
        {$\epsilon, \delta$} & {Privacy parameters} \\
        {$G, \mathbf{A}$} & {Original graph and its adjacency matrix} \\
        {$\mathcal{A}$} & {Randomized algorithm} \\
        {$D_\alpha$} & {Rényi divergence of order $\alpha$} \\
        {$|V|, |E|$}  & {Number of nodes and edges in $G$} \\
        {$\mathbf{x},\mathbf{y}$} & {Lowercase letters denoting vectors}  \\
        {$\mathbf{x}\cdot\mathbf{y}$} & {Inner product between two vectors}  \\
        {$\mathbf{X},\mathbf{Y}$} & {Bold capital letters denoting matrices} \\
        {$L$} & {Loss function} \\
        {$\mathbf{W}_{in},\mathbf{W}_{out}$} & {Embedding matrices of skip-gram} \\
        {$k$} & {Negative sampling number} \\
        {$r$} & {Dimension of low-dimensional vectors} \\
        $\mathbf{I}$ & Identity matrix \\
        \hline
      \end{tabular}
  \end{threeparttable}
\end{table}

\subsection{Graph Embedding Generation}\label{Original_SGM}
\begin{figure}[htb!]
  \centering
  \includegraphics[width=3in]{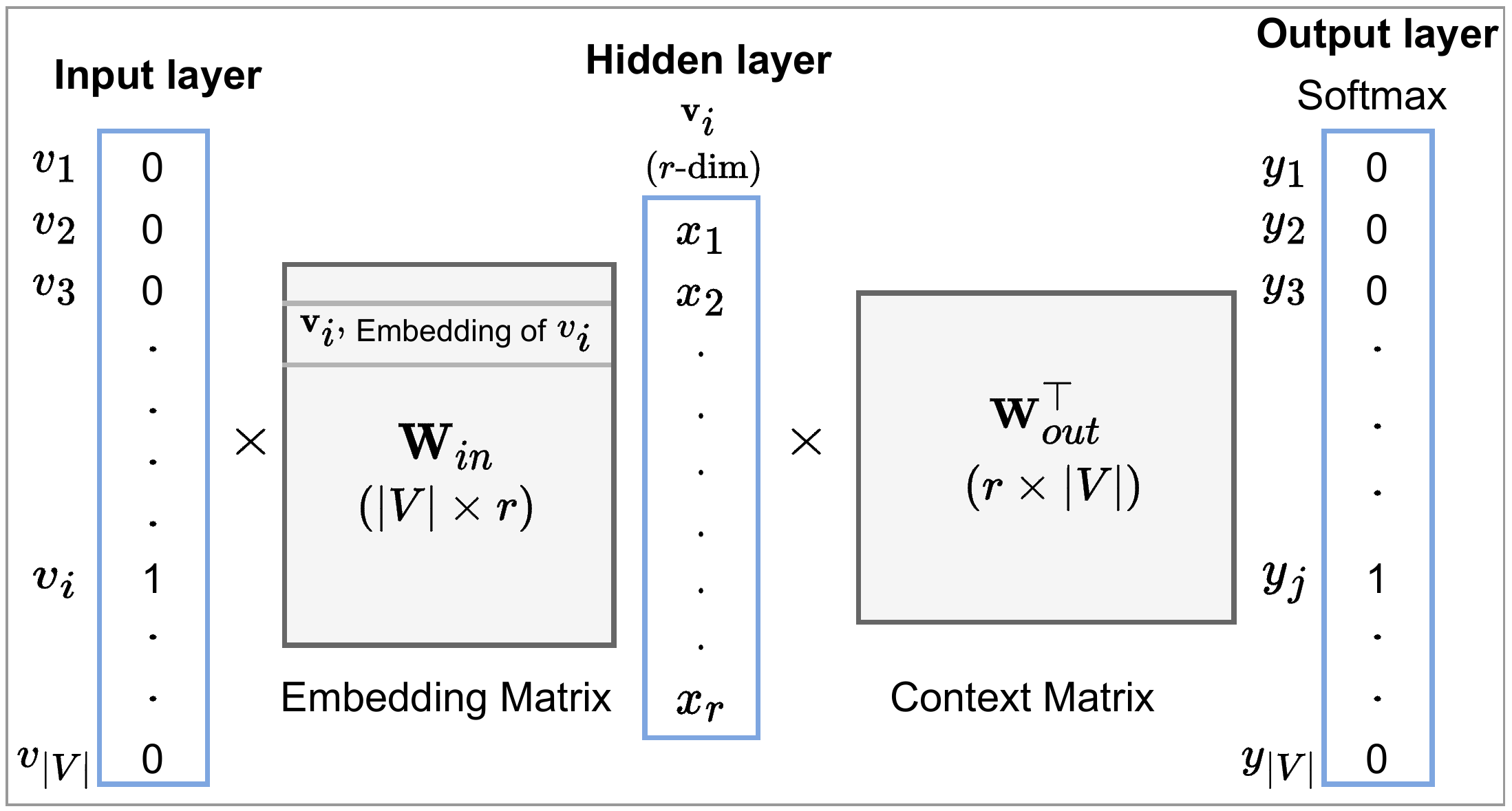}
  \caption{Architecture of a Skip-gram Model. The embedding matrices $\mathbf{W}_{in}$, with a size of $|V|\times{r}$, and $\mathbf{W}_{out}^\top$, with a size of $r\times|V|$, are two model parameters that require optimization. For any node pair $(v_i, v_j)$, $\mathbf{v}_i$ represents the vector for $v_i$ in the input weight matrix $\mathbf{W}_{in}$, while $\mathbf{v}_j$ represents the vector for $v_j$ in the output weight matrix $\mathbf{W}_{out}$.}
  \label{fig:SkipGram_ProFig}
\end{figure}

We consider an undirected and unweighted graph $G=(V,E)$, where $V$ is the set of nodes and $E$ is the set of edges.
For any two nodes $v_i$ and $v_j$ belonging to $V$, $(i, j)\in E$ represents an edge in $G$.
The adjacency matrix of $G$ is $\mathbf{A}$, where $\mathbf{A}_{ij}=1$ if $(v_i,v_j) \in E$ and $\mathbf{A}_{ij}=0$ otherwise.
The goal of graph embedding generation techniques is to learn a function $f: V\rightarrow{\mathbf{W}}$, where $\mathbf{W}\in\mathbb{R}^{|V|\times{r}}$ with embedding dimension $r\ll|V|$,
while preserving the inherent properties and structures of the original $G$. We denote $\mathbf{v}_i = f(v_i)$ as the vector embedding of the node $v_i$.

The skip-gram model is a neural network architecture that learns word embeddings by predicting the surrounding words given a target word. In the context of graph embedding, each node in the network can be thought of as a ``word'', and the surrounding words are defined as the nodes that co-occur with the target node in the network.
Inspired by this setting, DeepWalk~\cite{perozzi2014deepwalk} achieves graph embedding generation by treating the paths traversed by random walks over networks as sentences and using skip-gram (see Fig.\,\ref{fig:SkipGram_ProFig}) to learn latent representations of nodes. With the advent of DeepWalk, several skip-gram based graph embedding generation models have emerged,
including LINE~\cite{tang2015line}, PTE~\cite{tang2015pte}, and node2vec~\cite{grover2016node2vec}. These methods aim at learning informative node representations to predict neighboring nodes.
As a result, these methods share a common objective of maximizing the log-likelihood function, i.e. $\max p\left(v_j \mid v_i\right)$, and in machine learning, the convention is to minimize the cost function:
\begin{equation}\label{eq:naive_objFunc}
 \min -p\left(v_j \mid v_i\right)
 =-\frac{\exp(\mathbf{v}_i\cdot\mathbf{v}_j)}{\sum_{v^n \in V}\exp(\mathbf{v}_i\cdot\mathbf{v}^n)}.    
\end{equation}

However, the objective of Eq.\,(\ref{eq:naive_objFunc}) is hard to optimize, due to the costly summation over all inner product with every vertex of the graph. To improve the training efficiency, we adopt the skip-gram with negative sampling, which turns to optimize the following objective
function for each observed $(v_i,v_j)$ pair:
\begin{equation}\label{eq:line_obj}
  \begin{split}
    &\min L(v_i,v_j) \\ 
    =& -\log \sigma\left(\mathbf{v}_j \cdot \mathbf{v}_i\right) - \sum_{n=1}^k \mathbb{E}_{v^n \sim \mathbb{P}_{n}(v)}\left[\log \sigma\left(-\mathbf{v}^n \cdot \mathbf{v}_i\right)\right],
  \end{split}
\end{equation}
where $k$ is the number of negative samples, $\mathbf{v}_i\in\mathbf{W}_{in}$ is the central vector of $v_i$, $\mathbf{v}_j\in\mathbf{W}_{out}$ is the context vector of $v_j$,
$\sigma(\mathbf{v}_i\cdot\mathbf{v}_j)$ is the normalized similarity of $v_i$ and $v_j$ under the their representations $\mathbf{v}_i$ and $\mathbf{v}_j$ learned from the model, $\sigma(\cdot)=\frac{1}{1+\exp(-x)}$ is the classic \emph{Sigmoid} activation function. 
$\mathbb{P}_{n}(v)$ is the noise distribution for negative sampling, which is carefully designed in Section~\ref{sec:opt_emb} of this paper to achieve provable structure preference.

\subsection{Differential Privacy}\label{pre:diff_priv}
\textbf{$(\epsilon,\delta)$-DP.}
Differential privacy (DP)~\cite{dwork2006calibrating} has emerged as the de-facto standard
notion in private data analysis. In general, it requires an algorithm to be insensitive to the changes in any individual's record. In the context of graph data, two graph datasets $G$ and $G^\prime$ are considered as neighboring databases if they differ by one record (i.e., edge or node). Formally, DP for graph data is defined as follows.  
\begin{definition}[Edge (Node)-Level DP~\cite{hay2009accurate}]\label{DP_def}
A graph analysis mechanism $\mathcal{A}$ achieves edge (node)-level $(\epsilon, \delta)$-DP, if for any pair of input graphs $G$ and $G^\prime$ that are neighbors (differ by at most one edge or node), and for all possible $O \subseteq Range(\mathcal{A})$, we have: 
$\mathbb{P}[\mathcal{A}(G) \in O] \leq \exp(\epsilon) \cdot \mathbb{P}[\mathcal{A}(G^\prime) \in O] + \delta$.
\end{definition}

The concept of the neighboring dataset $G$, $G^\prime$ is categorized into two types. Specifically, if $G^\prime$ can be derived by replacing a single data instance in $G$, it is termed bounded DP~\cite{dwork2006calibrating}. If $G^\prime$ can be obtained by adding or removing a data sample from $G$, it is termed unbounded DP~\cite{dwork2006differential}. The parameter $\epsilon$ represents the privacy budget, which determines the trade-off between privacy and utility in the algorithm. A smaller value of $\epsilon$ indicates a higher level of privacy protection. The parameter $\delta$ is typically chosen to be very small and is informally referred to as the failure probability.

\textbf{$(\alpha,\epsilon)$-RDP.} In this paper, we use an alternative definition of DP, called RDP~\cite{mironov2017renyi}, which allows obtaining tighter sequential composition results:

\begin{definition}[RDP~\cite{mironov2017renyi}]
Given $\alpha>1$ and $\epsilon>0$, a randomized algorithm $\mathcal{A}$ satisfies $(\alpha, \epsilon)$-RDP if for every neighboring datasets $G$ and $G^{\prime}$, we have 
$D_\alpha\left(\mathcal{A}(G) \| \mathcal{A}\left(G^{\prime}\right)\right) \leq \epsilon$,
where $D_\alpha(P \| Q)$ is the Rényi divergence of order $\alpha$ between probability distributions $P$ and $Q$ defined as
$D_\alpha(P \| Q)=\frac{1}{\alpha-1} \log \mathbb{E}_{x \sim Q}\left[\frac{P(x)}{Q(x)}\right]^\alpha$.
\end{definition}

A crucial feature of RDP is that it can be transformed into standard $(\epsilon, \delta)$-DP using Proposition 3 from~\cite{mironov2017renyi}. This conversion can be done in the following way:

\begin{theorem}[RDP conversion to ($\epsilon, \delta)$-DP~\cite{mironov2017renyi}]\label{Theo:RDP_to_DP}
If $\mathcal{A}$ is an $(\alpha, \epsilon)$-RDP algorithm, then it also satisfies $(\epsilon+\frac{\log (1 / \delta)}{\alpha-1}, \delta)$-DP for any $\delta \in(0,1)$.
\end{theorem}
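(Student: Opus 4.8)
The plan is to verify the $(\epsilon + \frac{\log(1/\delta)}{\alpha-1}, \delta)$-DP inequality directly, starting from the RDP hypothesis $D_\alpha(\mathcal{A}(G) \| \mathcal{A}(G')) \leq \epsilon$. Write $P$ and $Q$ for the output distributions $\mathcal{A}(G)$ and $\mathcal{A}(G')$, fix an arbitrary measurable event $O \subseteq Range(\mathcal{A})$, and set the target budget $\epsilon' = \epsilon + \frac{\log(1/\delta)}{\alpha-1}$, so that the goal becomes $P(O) \leq e^{\epsilon'} Q(O) + \delta$. The central idea is to partition $O$ according to the size of the pointwise likelihood ratio $P(x)/Q(x)$: on the region where this ratio is small the standard DP bound holds by construction, while the region where it is large must carry only a tiny amount of probability mass, and controlling exactly that mass is what the RDP hypothesis buys us.

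Concretely, I would introduce the ``bad'' set $B = \{x : P(x)/Q(x) > e^{\epsilon'}\}$ and split $P(O) = P(O \cap B) + P(O \setminus B)$. The second term is immediate: on $O \setminus B$ one has $P(x) \leq e^{\epsilon'} Q(x)$ pointwise, so $P(O \setminus B) \leq e^{\epsilon'} Q(O \setminus B) \leq e^{\epsilon'} Q(O)$. It then remains to show $P(O \cap B) \leq P(B) \leq \delta$, which is the crux of the argument.

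To bound $P(B)$, I would use a Chernoff-style tail estimate driven by the Rényi moment. On $B$ we have $(P(x)/Q(x))^{\alpha-1} > e^{(\alpha-1)\epsilon'}$, so multiplying through by $P(x)$ and integrating gives $P(B) = \int_B P(x)\,dx \leq e^{-(\alpha-1)\epsilon'}\int_B P(x)^\alpha Q(x)^{1-\alpha}\,dx \leq e^{-(\alpha-1)\epsilon'}\int P(x)^\alpha Q(x)^{1-\alpha}\,dx$. The remaining integral is exactly $e^{(\alpha-1)D_\alpha(P\|Q)}$ by the definition of Rényi divergence given above, which the RDP hypothesis bounds by $e^{(\alpha-1)\epsilon}$. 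Combining yields $P(B) \leq e^{(\alpha-1)(\epsilon - \epsilon')}$, and substituting $\epsilon - \epsilon' = -\frac{\log(1/\delta)}{\alpha-1}$ collapses the exponent to $\log \delta$, so $P(B) \leq \delta$. Adding the two pieces gives $P(O) \leq e^{\epsilon'} Q(O) + \delta$, as required.

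I expect the main obstacle to be this $P(B) \leq \delta$ step, since it is where the expectation-based RDP guarantee must be converted into a genuine tail probability on the privacy loss. The subtlety is recognizing that the additive offset $\frac{\log(1/\delta)}{\alpha-1}$ in the target budget is not arbitrary but is calibrated precisely so that the factor $(\alpha-1)(\epsilon - \epsilon')$ lands exactly at $\log\delta$; any smaller offset would leave the tail probability above $\delta$. A minor technical point to handle carefully is the degenerate case $Q(x)=0$, where the ratio is infinite and $x$ lies in $B$: such points contribute nothing to the $e^{\epsilon'}Q(O)$ term and are fully accounted for within $P(B)$, so they cause no harm.
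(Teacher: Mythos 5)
Your proof is correct: the decomposition over the bad set $B=\{x: P(x)/Q(x)>e^{\epsilon'}\}$, the pointwise bound off $B$, and the tail bound $P(B)\leq e^{-(\alpha-1)\epsilon'}\,\mathbb{E}_{x\sim Q}[(P(x)/Q(x))^\alpha]\leq e^{(\alpha-1)(\epsilon-\epsilon')}=\delta$ is exactly the standard Markov-inequality argument behind Proposition~3 of Mironov's RDP paper, which is the source the paper cites for this theorem without reproducing a proof. Your observation that the offset $\frac{\log(1/\delta)}{\alpha-1}$ is calibrated so the exponent lands at $\log\delta$, and your remark on the $Q(x)=0$ degenerate case, are both accurate; nothing further is needed.
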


\textbf{Gaussian mechanism.}
Suppose a function $f$ maps a graph $G$ to an $r$-dimensional output in $\mathbb{R}^r$. To create a differentially private mechanism of $f$, it is common practice to inject random noise into the output of $f$. The size of this noise is determined by the sensitivity of $f$, which is defined as follows.
\begin{definition}[Sensitivity~\cite{dwork2006calibrating}]\label{SensDef}
Given a function $f:G\rightarrow\mathbb{R}^r$, for any neighboring datasets $G$ and $G^\prime$, the $\ell_2$-sensitivity of $f$ is defined as $S_{f} = \max_{G, G^\prime}\|f(G) - f(G^\prime)\|_2$.
\end{definition}

The Gaussian mechanism is a commonly used approach to achieve RDP, whereby Gaussian noise is added to the output of an algorithm to protect user privacy. By adding Gaussian noise with variance $\sigma^2$ to function $f$, such that $\mathcal{A}(G) = f(G) + \mathcal{N}(\sigma^2 \mathbf{I})$, we can obtain an $(\alpha, \epsilon)$-RDP algorithm for all $\alpha > 1$, where $\epsilon=\frac{\alpha S_f^2}{2 \sigma^2}$~\cite{mironov2017renyi}.

It is worth noting that the concept of sensitivity implies that satisfying node-level DP is challenging as varying one node could result in the removal of $|V|-1$ edges in the worst case. As a result, a significant amount of noise must be added to ensure privacy protection. 

\textbf{Important properties.} RDP has two crucial properties that are essential when designing intricate algorithms from simpler ones:

\begin{itemize}
    \item \emph{Sequential Composition:} When an $(\alpha, \epsilon)$-RDP algorithm is applied multiple times ($m$ instances) on the same dataset, the resulting output will be at most $(\alpha, m\epsilon)$-RDP.
    \item \emph{Robustness to Post-processing:} The $(\alpha, \epsilon)$-RDP property remains intact even after any post-processing steps are applied to the output of an $(\alpha, \epsilon)$-RDP algorithm.
\end{itemize}

\subsection{DPSGD}\label{sec:dpsgd}
DPSGD~\cite{abadi2016deep} is an advanced training algorithm for deep learning models. Its main purpose is to manage the issue of excessive splitting of the privacy budget during optimization. In DPSGD, the gradient $\mathbf{g}\left(x_i\right)$ is computed for each example $x_i$ in a randomly selected batch of size $B$. The $\ell_2$ norm of each gradient is then clipped using a threshold $C$ to control the sensitivity of $\mathbf{g}\left(x_i\right)$. After clipping, the gradients are summed, and Gaussian noise $\mathcal{N}(C^2 \sigma^2\mathbf{I})$ is added to ensure privacy. The resulting noisy cumulative gradient $\tilde{\mathbf{g}}$ is used to update the model parameters by taking its average. The expression for $\tilde{\mathbf{g}}$ is as follows:
\begin{equation}\label{dpsgd_eq1}
  \tilde{\mathbf{g}} \leftarrow \frac{1}{B}\big(\sum_{i=1}^{B} Clip(\mathbf{g}\left(x_i\right))+\mathcal{N}(C^2 \sigma^2 \mathbf{I})\big),
\end{equation}
in which $Clip(\cdot)$ represents the clipping function defined as $Clip(\mathbf{g}\left(x_i\right))=\mathbf{g}\left(x_i\right)/\max(1,\frac{\|\mathbf{g}\left(x_i\right)\|_2}{C})$.

\subsection{Node Proximity}\label{sec:proximity}
Node proximity measures are commonly used to quantify the closeness or relationships between two or more nodes in a graph. A general definition of node proximity can be expressed as follows:
\begin{definition}[Node Proximity Matrix]
The node proximity matrix $\mathbf{P}$ of a graph $G$ is a $|V| \times |V|$ matrix that quantifies the closeness between pairs of nodes. Each element $p_{ij}\in\mathbf{P}$ reflects the proximity of nodes $v_i$ and $v_j$ based on their structural relationships in $G$. This proximity is defined as: 
\begin{equation}\label{eq:node_proximity}
    p_{ij} = g(N(v_i), N(v_j), G),     
\end{equation}
where $N(v_i)$ and $N(v_j)$ are the neighbors of nodes $v_i$ and $v_j$, and $g(\cdot)$ is a function that applies various measures such as common neighbors to determine their proximity. 
\end{definition}

The function $g(\cdot)$ can consist of first-order, second-order, or high-order features. First-order features, such as common neighbors and preferential attachment~\cite{barabasi1999emergence}, consider only the one-hop neighbors of the target nodes. Second-order features, like Adamic-Adar and resource allocation~\cite{zhou2009predicting}, are based on two-hop neighborhoods. High-order heuristics, which require knowledge of the entire network, include Katz~\cite{katz1953new}, PageRank~\cite{haveliwala2002topic}, and random walk-based proximity~\cite{yang2015network}.

\section{Problem Description and A First-cut Solution}
\subsection{Problem Description}\label{sec:ProDes}
\textbf{Threat Model.} In this paper, we consider a white-box attack~\cite{he2019model}, where the adversary possesses full information about the skip-gram model used for graph embedding, including its architecture and parameters. This means that the attacker has access to the published model but not the training data and training process. With the knowledge of some (or even all except the target) samples in the training dataset, the attacker aims to infer the existence of a target training data sample.

\textbf{Privacy Model.} 
In reality, data owners who have the ability to analyze graph structural features usually hope to be able to customize the extracted graph structural information. They aim to input the extracted features into a unified differentially private graph embedding generation model, with the goal of obtaining private low-dimensional node vectors that effectively support specific mining tasks. As stated in Section~\ref{sec:proximity}, the node proximity can quantify structural features. \emph{Given a graph $G=(V, E)$ and an arbitrary node proximity $p_{ij}$, we aim to achieve differentially private graph embedding generation while preserving arbitrary structure preferences.}
Formally, the definition of differentially private graph embedding generation is as follows: 

\begin{definition}[Private Graph Embedding Generation under Bounded DP]\label{def:priv_sg}
Let $\Theta$ be a collection of the input and output weight matrix $\{\mathbf{W}_{in}, \mathbf{W}_{out}\}$, that is $\Theta=\{\mathbf{W}_{in}, \mathbf{W}_{out}\}$. A graph embedding generation model $\mathcal{L}=\mathbb{E}_{(i,j)\in{E}}L(v_i,v_j)$ satisfies $(\epsilon, \delta)$-node-level DP under bounded constraints if two neighboring graphs $G$ and $G^\prime$, which differ in only a node, and for all
possible $\Theta_S\subseteq{Range(\mathcal{L})}$:
\begin{equation*}
  \mathbb{P}(\mathcal{L}(G) \in \Theta_S) \leq \exp (\epsilon) \times \mathbb{P}\left(\mathcal{L}\left(G^\prime\right) \in \Theta_S\right) + \delta.
\end{equation*}
where $\Theta_S$ denotes the set comprising all possible values of $\Theta$.
\end{definition}

As stated in~\cite{xiang2023preserving}, the definition of node-level private graph embedding generation is only applicable under bounded DP. 
Consider a pair of neighboring graphs $G^\ast\subseteq{G}$ and $G^\prime\subseteq{G}$, where $G^{\prime}=G^\ast\cup\{node \ {z} \}$. If we use unbounded DP as the privacy definition, we face a challenge when attempting to express the following inequality:
\begin{equation*}
    \mathbb{P}\left(\mathcal{L}_z\left(G^{\prime}\right) \in \Theta_S\right) \leq \exp(\epsilon) \times \mathbb{P}\left(\mathcal{L}_z\left(G^\ast\right) \in \Theta_S\right)+\delta.    
\end{equation*}

The issue arises because the node $z$ is not included in $G^\ast$. Therefore, the node-level differentially private graph embedding generation is incompatible with the definition of unbounded DP.

Based on the robustness to post-processing, we can immediately conclude that the $(\epsilon, \delta)$-private graph embedding generation is robust to graph downstream tasks, as stated in the following theorem:
\begin{theorem}
Let $\mathcal{L}$ be an $(\epsilon,\delta)$-node-level private graph embedding generation model, and $f$ be any graph downstream task whose input is the private graph embedding matrix (i.e., $\mathbf{W}_{in}$). Then, $f \circ {\mathcal{L}}$ satisfies $(\epsilon,\delta)$-node-level DP.
\end{theorem}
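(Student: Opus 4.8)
The plan is to verify the defining inequality of node-level $(\epsilon,\delta)$-DP (Definition~\ref{DP_def}) directly for the composed map $f \circ \mathcal{L}$, leveraging the robustness-to-post-processing property already invoked for RDP. I would fix an arbitrary pair of neighboring graphs $G$ and $G^\prime$ that differ in a single node, and fix an arbitrary measurable set $S \subseteq Range(f \circ \mathcal{L})$; the goal is to show $\mathbb{P}[(f\circ\mathcal{L})(G)\in S] \leq \exp(\epsilon)\,\mathbb{P}[(f\circ\mathcal{L})(G^\prime)\in S] + \delta$. Since the post-processing function $f$ receives only $\mathbf{W}_{in}$ (a component of the DP output $\Theta$) and no further access to the graph, the entire dependence on $G$ is funneled through $\mathcal{L}$.

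First I would treat the case where $f$ is deterministic. Define the preimage $T = \{\Theta : f(\Theta)\in S\} = f^{-1}(S)$, which is itself a measurable subset of $Range(\mathcal{L})$. The event $\{(f\circ\mathcal{L})(G)\in S\}$ coincides with $\{\mathcal{L}(G)\in T\}$, so $\mathbb{P}[(f\circ\mathcal{L})(G)\in S] = \mathbb{P}[\mathcal{L}(G)\in T]$, and likewise for $G^\prime$. Because $\mathcal{L}$ is $(\epsilon,\delta)$-node-level DP and $T$ is an admissible output set, applying the DP inequality to the pair $(G,G^\prime)$ and the set $T$ yields exactly the desired bound after substituting $\mathbb{P}[\mathcal{L}(G)\in T]=\mathbb{P}[(f\circ\mathcal{L})(G)\in S]$ on both sides.

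To handle a \emph{randomized} downstream task $f$ --- the practically relevant case, since many graph mining routines use internal randomness --- I would model $f$ as consuming an independent source of randomness $\omega$ drawn from some distribution $\mu$, writing $f(\Theta)=h(\Theta,\omega)$ with $h(\cdot,\omega)$ deterministic for each fixed $\omega$. Conditioning on $\omega$, the deterministic argument above gives the DP inequality for $h(\cdot,\omega)\circ\mathcal{L}$ with the \emph{same} $(\epsilon,\delta)$ for every fixed $\omega$; integrating both sides against $\mu$ and using the independence of $\omega$ from the graph preserves the inequality, since the multiplicative $\exp(\epsilon)$ factor and the additive $\delta$ are constants that pass through the expectation over $\omega$.

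The argument involves no numerical estimation and no tension between privacy and utility, so there is no genuine obstacle. The only point requiring care is the randomized-$f$ reduction: one must make explicit that the auxiliary randomness $\omega$ is independent of the graph input and of the internal noise of $\mathcal{L}$, so that conditioning on $\omega$ does not disturb the DP guarantee of $\mathcal{L}$. Once this independence is stated, the claim is an immediate instance of the post-processing robustness of differential privacy, and holds verbatim whether one reasons directly in the $(\epsilon,\delta)$-DP formulation or first establishes robustness at the RDP level and converts via Theorem~\ref{Theo:RDP_to_DP}.
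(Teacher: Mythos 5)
Your proposal is correct and follows the same route as the paper: the paper simply invokes the robustness-to-post-processing property of DP and concludes the theorem immediately, without writing out a proof. Your preimage argument for deterministic $f$ and the conditioning-on-independent-randomness reduction for randomized $f$ are exactly the standard justification of that property, so you have merely made explicit what the paper leaves implicit.
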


\subsection{A First-cut Solution}\label{sec:naive_method}
To establish structural preference settings in graph embedding generation, we design a novel objective function $L_{nov}$ for each edge by redefining the objective function $L$ in Eq.\,(\ref{eq:line_obj}), as follows: 
\begin{align}
     &\min L_{nov}(v_i,v_j,p_{ij}) = p_{ij}L(v_i,v_j), \label{eq:novel_edgeObj} \\
    =& -p_{ij}\log \sigma\left(\mathbf{v}_j \cdot \mathbf{v}_i\right) - p_{ij}\sum_{n=1}^k \mathbb{E}_{v^n \sim \mathbb{P}_{n}(v)}\left[\log \sigma\left(-\mathbf{v}^n \cdot \mathbf{v}_i\right)\right], \nonumber
\end{align}
where $p_{ij}$ denotes an \emph{arbitrary node proximity} that quantifies structural features.

As stated in Section~\ref{sec:dpsgd}, DPSGD with the advanced composition mechanism can manage the issue of excessive splitting of the privacy budget during optimization.
One straightforward approach is to perturb the gradient on $\mathbf{v}$, where \emph{$\mathbf{v}$ is a general notation representing either $\mathbf{v}_i$ or $\mathbf{v}_j$}. Formally, the noisy gradient $\widetilde{\nabla}_{\mathbf{v}}L_{nov}^B$ is expressed as follows:
\begin{equation}\label{eq:novNoiseGra_on_vi}
\begin{split}
    &\widetilde{\nabla}_{\mathbf{v}}L_{nov}^B(v_{i_b},v_{j_b},p_{i_bj_b})\\
    =&\frac{1}{B}\big(\sum_{b=1}^B Clip\big(\frac{\partial{L_{nov}(v_{i_b},v_{j_b},p_{i_bj_b})}}{\partial\mathbf{v}}\big)+\mathcal{N}(S_{\nabla_{\mathbf{v}}}^2\sigma^2\mathbf{I})\big),      
\end{split}
\end{equation}
where $B$ denotes the size of batch sampling, and the upper bound of $S_{\nabla_{\mathbf{v}_i}}$ is $BC$ under node-level DP, with $C$ being the clipping threshold. The main reason for significant noise is that in graph learning, individual examples no longer compute gradients independently.  

\textbf{Limitation.}
The approach discussed above, however, produces poor results. The root cause of the poor performance is the large sensitivity.
\section{Our Proposal: SE-PrivGEmb}\label{Alg_Section}
To overcome the above limitation, we present SE-PrivGEmb, a novel method for achieving differentially private graph embedding generation, which incorporates structural preferences while satisfying node-level RDP and high data utility.
In particular, we deeply analyze the optimization of skip-gram and design a noise tolerance mechanism via perturbing non-zero vectors (see Section~\ref{sec:nonZero_Perturb}), which can effectively mitigate the utility degradation caused by high sensitivity. We theoretically prove that the skip-gram can preserve arbitrary node proximities by carefully designing its negative sampling probabilities (see Section~\ref{sec:opt_emb}). This ensures that one can achieve structural preferences that align with the desired mining objectives.
The training algorithm is shown in Section~\ref{sec:train_alg}. 

\subsection{Noise Tolerance via Perturbing Non-zero Vectors}\label{sec:nonZero_Perturb}
For the gradient with respect to input weight matrix, that is $\frac{\partial L_{nov}}{\partial\mathbf{W}_{in}}$, it is equivalent to taking the derivative for the hidden layer (see Fig.\,\ref{fig:SkipGram_ProFig}). Specifically, 
 \begin{align}
        \frac{\partial L_{nov}}{\partial\mathbf{W}_{in}}
        =\frac{\partial L_{nov}}{\partial\mathbf{v}_i}
        =&p_{ij}\sum_{n=0}^k \left(\sigma\left(\mathbf{v}^n \cdot \mathbf{v}_i\right) - \mathbb{I}_{v_j}[v^n] \right) \cdot \mathbf{v}^n, \label{eq:sgm_vi_gradient}
\end{align}
since the input layer is a one-hot encoded vector. Here, $\mathbb{I}_{v_j}[v^n]$ is an indicator function to indicate whether $v^n$ is a positive node and when $n=0$, $v^n=v_j$.
 
For the gradient with respect to output weight matrix, that is 
 \begin{align}
          \frac{\partial L_{nov}}{\partial\mathbf{W}_{out}}
        = \frac{\partial L_{nov}}{\partial\mathbf{v}^n}
        =p_{ij}\left(\sigma\left(\mathbf{v}^n \cdot \mathbf{v}_i\right) - \mathbb{I}_{v_j}[v^n] \right) \cdot \mathbf{v}_i, \label{eq:sgm_vj_gradient}
\end{align}
with negative sampling, only a fraction of the node vectors in $\mathbf{W}_{out}$ are updated. Gradients for $k+1$ node vectors, representing both positive and negative nodes in $\mathbf{W}_{out}$, are computed using Eq.\,(\ref{eq:sgm_vj_gradient}). 

This encourages us to inject noise exclusively into non-zero vectors within the gradients of $\mathbf{W}_{in}$ and $\mathbf{W}_{out}$ to improve utility. As outlined in Section~\ref{sec:naive_method}, utilizing $\mathbf{v}$ as a general notation representing either $\mathbf{v}_i$ or $\mathbf{v}_j$, we have:
\begin{equation}\label{eq:nonZeroNoiseGra_on_vi}
    \begin{split}
     &\widetilde{\nabla}_{\mathbf{v}}L_{nov}^B(v_{i_b},v_{j_b},p_{i_bj_b}) \\
    =&\frac{1}{B}\big(\sum_{b=1}^B Clip\big(\frac{\partial{L_{nov}(v_{i_b},v_{j_b},p_{i_bj_b})}}{\partial\mathbf{v}}\big)+\widetilde{\mathcal{N}}(S_{\nabla_{\mathbf{v}}}^2\sigma^2\mathbf{I})\big),
    \end{split}
\end{equation}
where \emph{$\widetilde{\mathcal{N}}(\cdot)$ denotes the noise matrix that selectively adds noise to non-zero vectors in the gradient}. Taking $\mathbf{W}_{in}$ as an example, the resulting $\mathbf{W}_{in}$ using Eq.\,(\ref{eq:nonZeroNoiseGra_on_vi}) is illustrated in Fig.\,\ref{fig:updateInWeight}(d) after the gradient update.  

\begin{figure}[!htb]
  \centerline{\includegraphics[width = 3.5in]{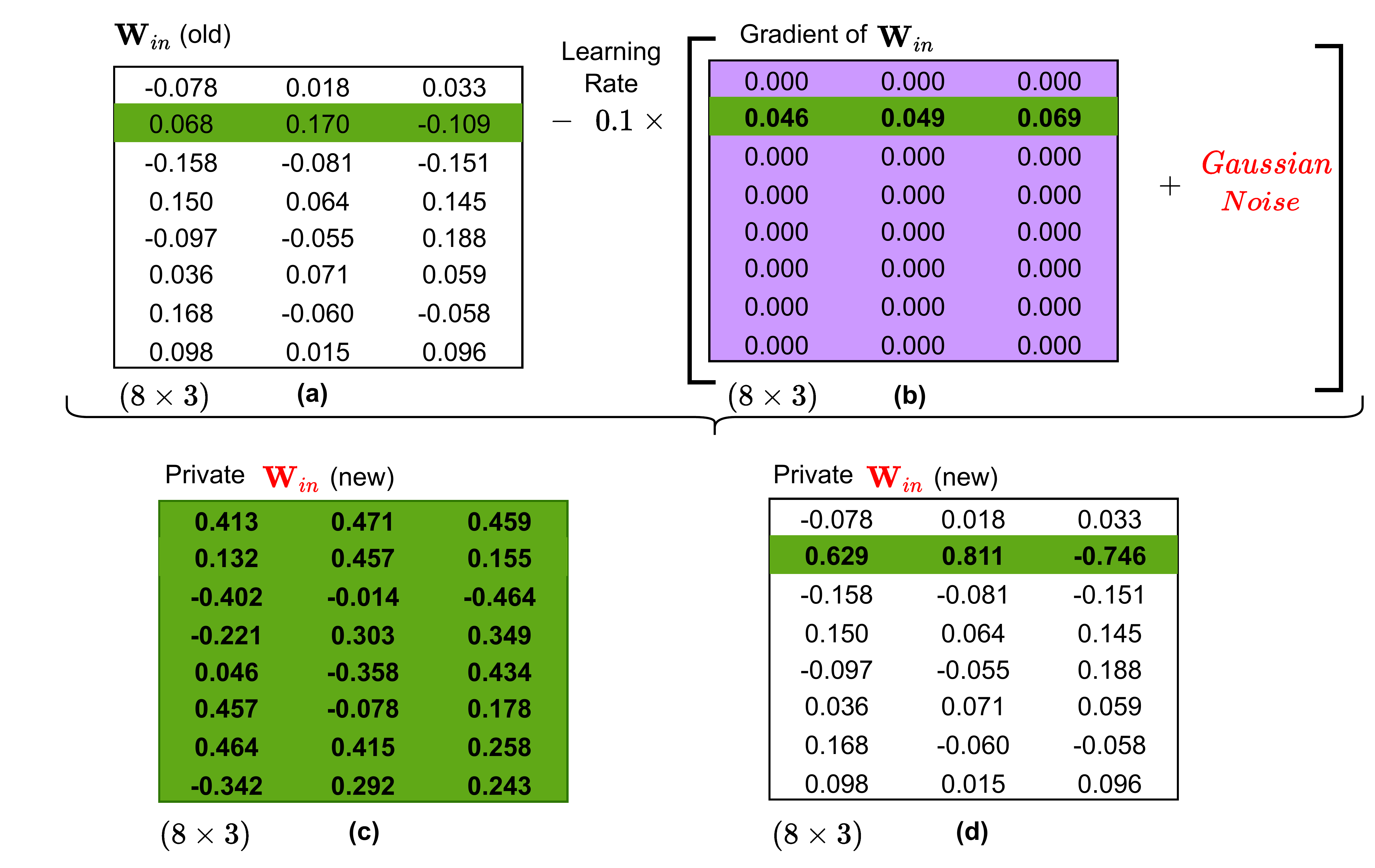}}
  \caption{An Illustration of Private Update for $\mathbf{W}_{in}$. (a) represents the original input weight, (b) represents the gradient of the input weight, while (c) and (d) depict the perturbed input weight using Eq.\,(\ref{eq:novNoiseGra_on_vi}) and Eq.\,(\ref{eq:nonZeroNoiseGra_on_vi}), respectively.}
  \label{fig:updateInWeight}
\end{figure}

It is worth mentioning, here, that as stated in Definition~\ref{def:priv_sg}, we only publish the final graph embedding matrices that satisfy DP after iterative optimization. Therefore, in this scenario, the gradient updates of each iteration in Fig.\,\ref{fig:updateInWeight}(b) are not published, so the attackers do not reveal information about which edges are sampled. 

\subsection{Theoretical Guarantee on Structure Preference}\label{sec:opt_emb}
In this section, by carefully designing the negative sampling probability $\mathbb{P}_n(v)$, we theoretically prove that the skip-gram objective can preserve arbitrary node proximities that quantify structural features. 

\begin{theorem}\label{theo:opti_emb}
Consider a node proximity matrix $\mathbf{P}$ for a given graph $G$, with its elements denoted as $p_{ij} \in \mathbf{P}$.
Let the negative sampling probability $\mathbb{P}_n(v)$ in objective~(\ref{eq:novel_edgeObj})
be proportional to $\frac{\min(\mathbf{P})}{\sum_{v_{j} \in {V}} p_{ij}}$,
where $\sum_{v_j \in {V}} p_{ij}$ represents the sum of proximity values associated with node $v_i$ across all the nodes in the graph, and 
$\min(\mathbf{P}) = \min\{p_{ij} | p_{ij}>0, \forall i,j\}$.
Let $x_{ij} = \mathbf{v}_i \cdot \mathbf{v}_j$ in Eq.\,(\ref{eq:novel_edgeObj}). We can derive the theoretical optimal solution in Eq.\,(\ref{eq:novel_edgeObj}) with respect to $x_{ij}$ as follows:

\begin{equation}\label{eq:opt_emb}
    x_{ij} = \mathbf{v}_i \cdot \mathbf{v}_j
    = \log \big(\frac{p_{ij}}{k\min(\mathbf{P})}\big),
\end{equation}
where $k$ and $\min(\mathbf{P})$\footnote{$\min(\mathbf{P})$ is a constant for a given graph dataset, which is derived from precomputing its proximity matrix (see Line~\ref{code:comp_prox} in Algorithm~\ref{alg:PrivSkipGram}). $\min(\mathbf{P})$ can be substituted with another constant, such as $\mathcal{O}(\min(\mathbf{P}))$, provided that the probability $\frac{\mathcal{O}(\min(\mathbf{P}))}{\sum_{v_{j} \in {V}} p_{ij}}\in (0,1)$.} are both constants.
According to Eq.\,(\ref{eq:opt_emb}), we observe that $x_{ij}$ preserves the logarithmic value of $p_{ij}$ with a constant shift.
\end{theorem}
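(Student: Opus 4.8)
The plan is to adapt the implicit matrix-factorization argument for skip-gram with negative sampling (Levy--Goldberg style) to the proximity-weighted objective in Eq.\,(\ref{eq:novel_edgeObj}). First I would assemble the full training objective as the aggregate over all positive pairs, $\mathcal{L}=\sum_{(i,j)}L_{nov}(v_i,v_j,p_{ij})$ (the expectation $\mathbb{E}_{(i,j)\in E}$ of Definition~\ref{def:priv_sg} differs only by the constant factor $1/|E|$, which does not move the optimum), and expand the negative-sampling expectation into an explicit weighted sum, $\mathbb{E}_{v^n\sim\mathbb{P}_n(v)}[\log\sigma(-\mathbf{v}^n\cdot\mathbf{v}_i)]=\sum_{v'\in V}\mathbb{P}_n(v')\log\sigma(-\mathbf{v}'\cdot\mathbf{v}_i)$. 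The crucial observation is that a single inner product $x_{ij}=\mathbf{v}_i\cdot\mathbf{v}_j$ enters $\mathcal{L}$ in exactly two ways: once through the positive term of the pair $(i,j)$, carrying weight $p_{ij}$, and once through the negative term of \emph{every} positive pair $(i,j')$ sharing the central node $v_i$, namely the summand in which $v_j$ happens to be drawn as a negative sample.

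Next I would collect these contributions. The positive part contributes $-p_{ij}\log\sigma(x_{ij})$, and summing the negative parts over all $j'$ with $(i,j')$ positive yields $-k\,\mathbb{P}_n(v_j)\bigl(\sum_{v_{j'}\in V}p_{ij'}\bigr)\log\sigma(-x_{ij})$, where the factor $\sum_{v_{j'}\in V}p_{ij'}$ is precisely the node-$i$ proximity mass named in the theorem. Substituting the prescribed sampling weight $\mathbb{P}_n(v_j)\propto \min(\mathbf{P})/\sum_{v_j\in V}p_{ij}$ makes this proximity mass cancel, so the effective negative weight collapses to the constant $k\min(\mathbf{P})$. This cancellation is the entire purpose of the carefully chosen $\mathbb{P}_n$: it renders the negative weight independent of the central node and leaves a decoupled per-pair objective $\ell(x_{ij})=-p_{ij}\log\sigma(x_{ij})-k\min(\mathbf{P})\log\sigma(-x_{ij})$.

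Finally I would optimize $\ell$ treating each $x_{ij}$ as a free scalar. Using $\frac{d}{dx}\log\sigma(x)=\sigma(-x)$ and $\frac{d}{dx}\log\sigma(-x)=-\sigma(x)$, the stationarity condition $\ell'(x_{ij})=0$ becomes $p_{ij}\,\sigma(-x_{ij})=k\min(\mathbf{P})\,\sigma(x_{ij})$; dividing through and invoking the identity $\sigma(x)/\sigma(-x)=e^{x}$ gives $e^{x_{ij}}=p_{ij}/(k\min(\mathbf{P}))$, i.e.\ Eq.\,(\ref{eq:opt_emb}). A second-derivative check, $\ell''(x)=(p_{ij}+k\min(\mathbf{P}))\,\sigma(x)\sigma(-x)>0$, confirms that $\ell$ is strictly convex and this stationary point is its unique global minimum.

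The main obstacle is justifying that the individual products $x_{ij}$ may be minimized independently. This is the one genuinely nontrivial assumption in the argument: it is valid provided the embedding dimension $r$ is large enough that $\mathbf{W}_{in}\mathbf{W}_{out}^\top$ can realize the target value in each entry without coupling constraints across entries, and I would state this explicitly as the operating regime under which the optimum is attainable. I would also note that the footnote's condition $\min(\mathbf{P})/\sum_{v_j\in V}p_{ij}\in(0,1)$ is exactly what keeps $\mathbb{P}_n$ a legitimate sampling weight, and that $p_{ij}>0$ (guaranteed for the positive pairs entering $\mathcal{L}$) keeps the logarithm in Eq.\,(\ref{eq:opt_emb}) well defined.
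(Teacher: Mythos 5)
Your proposal is correct and follows essentially the same route as the paper's proof: expand the negative-sampling expectation into an explicit sum, group the contributions to each $x_{ij}$ so that the designed $\mathbb{P}_n(v)$ cancels the row sum $\sum_{v_j\in V}p_{ij}$ and leaves the constant weight $k\min(\mathbf{P})$, then set the per-pair derivative to zero. Your additions --- the second-derivative convexity check and the explicit caveat that treating each $x_{ij}$ as a free scalar requires sufficiently large embedding dimension $r$ --- are sound refinements of steps the paper leaves implicit.
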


\begin{proof}
For ease of understanding, we first show the expected objective function of skip-gram as follows:
    \begin{equation}\label{eq:expected_obj}
        \begin{split}
            \min_{\mathbf{v}_i, \mathbf{v}_j} \mathcal{L}
            =&\mathbb{E}_{(i,j) \in E} L_{nov}(v_i,v_j,p_{ij}).
        \end{split}
    \end{equation}
    
Consider an arbitrary proximity matrix $\mathbf{P}$ of a graph, where $p_{ij}\in\mathbf{P}$.
We rewrite the objective~(\ref{eq:expected_obj}) as follows:

    \begin{equation*}
        \begin{split}
             \min_{\mathbf{v}_i, \mathbf{v}_j}\mathcal{L}
            =&\sum_{v_i\in {V}} \sum_{v_j\in {V}} p_{ij} \cdot\Bigl(-\log \sigma\left(\mathbf{v}_{j} \cdot \mathbf{v}_{i}\right)\\
            &- k \cdot \mathbb{E}_{v^n\sim{\mathbb{P}_n(v)}} \log \sigma\left(-\mathbf{v}_n \cdot \mathbf{v}_{i}\right)\Bigr)\\
            =&-\sum_{v_i\in {V}} \sum_{v_j\in {V}} p_{ij} \cdot\log \sigma\left(\mathbf{v}_{j} \cdot \mathbf{v}_{i}\right)\\
            & - \sum_{v_i\in {V}} \sum_{v_j\in {V}} p_{ij} \cdot \left(k \cdot \mathbb{E}_{v^n\sim{\mathbb{P}_n(v)}} \log \sigma\left(-\mathbf{v}_n \cdot \mathbf{v}_{i}\right)\right)\\
            =&-\sum_{v_{i} \in {V}} \sum_{v_{j} \in {V}}
            p_{ij} \cdot \log \sigma\left(\mathbf{v}_{j} \cdot \mathbf{v}_{i}\right) - \\
            &\sum_{v_{i} \in {V}} \big(\sum_{v_{j} \in {V}} p_{ij}\big) \cdot k \cdot \mathbb{E}_{v^n\sim{\mathbb{P}_n(v)}} \log \sigma\left(-\mathbf{v}_n \cdot \mathbf{v}_i\right).
        \end{split}
    \end{equation*}

        In this context, we define $\mathbb{P}_n(v)$ to be proportional to $\frac{\min(\mathbf{P})}{\sum_{v_{j} \in {V}} p_{ij}}$, and can explicitly express the expectation term as:
        \begin{flalign}\label{eq:new_negSmp}
            &\mathbb{E}_{v^n\sim{\mathbb{P}_n(v)}}\log \sigma\left(-\mathbf{v}_n \cdot \mathbf{v}_{i}\right) \\
            =&\sum_{v^n \in V} \frac{\min(\mathbf{P})}{\sum_{v_{j} \in V} p_{ij}}\log \sigma\left(-\mathbf{v}_n \cdot \mathbf{v}_{i}\right). \notag
        \end{flalign}

        With Eq.\,(\ref{eq:new_negSmp}), we have:
        \begin{equation}\label{eq:simplified_objFunc}
            \begin{split}
                \min_{\mathbf{v}_i,\mathbf{v}_j}\mathcal{L} = & -\sum_{v_{i} \in {V}} \sum_{v_{j} \in {V}}
                p_{ij} \cdot \log \sigma\left(\mathbf{v}_{j} \cdot \mathbf{v}_{i}\right) \\
                & - \sum_{v_{i} \in V} \sum_{v_{j} \in V} k \cdot \min(\mathbf{P}) \cdot \log \sigma\left(-\mathbf{v}_{j} \cdot \mathbf{v}_{i}\right).
            \end{split}
        \end{equation}

        To minimize Eq.\,(\ref{eq:simplified_objFunc}), we set the partial derivative of each independent variable $x_{ij} = \mathbf{v}_{j} \cdot \mathbf{v}_{i}$ to zero:

        \begin{equation*}
            \begin{split}
                 \frac{\partial {\mathcal{L}}}{\partial x_{ij}}
                =&\sum_{v_{i} \in {V}} \sum_{v_{j} \in {V}}\left((p_{ij} + k\cdot{\min(\mathbf{P})})\cdot\sigma(x_{ij}) - p_{ij}\right) = 0.
            \end{split}
        \end{equation*}

        This leads to the following expression:
        \begin{equation*}\label{Best_results}
            x_{ij} = \log \big(\frac{p_{ij}}{k\min(\mathbf{P})}\big),
        \end{equation*}
        which completes the proof.
\end{proof}

In Theorem~\ref{theo:opti_emb}, $x_{ij}$ represents the inner product of $v_i$ and $v_j$ to estimate the proximity of node pairs $(v_i, v_j)$. From the optimal solution in Eq.\,(\ref{eq:opt_emb}), we observe that $x_{ij}$ preserves the logarithmic value of $p_{ij}$ with a constant shift, indicating that our method maintains the proximity between any two nodes.

\textbf{Comparison with Prior Works.} In what follows, we also compare our findings with the optimal embedding presented in prior research to better reflect our contribution.

According to previous research~\cite{qiu2018network, du2018dynamic}, the expectation term within the objective~(\ref{eq:expected_obj}) can be expressed as:

\begin{equation}\label{eq:neg_exp_term}
\begin{split}
     \mathbb{E}_{v^n \sim \mathbb{P}_{n}(v)}\left[\log \sigma\left(-\mathbf{v}^n \cdot \mathbf{v}_i\right)\right] 
    =\sum_{v^n \in V}\frac{d_{v^n}}{\mathcal{D}}\log \sigma\left(-\mathbf{v}^n \cdot \mathbf{v}_i\right),   
\end{split}
\end{equation}
where $\mathbb{P}_{n}(v) \propto \frac{d_{v}^n}{|E|}$ is used for negative sampling, $d_v$ represents the degree of node $v$, and $\mathcal{D}=\sum_{ij}p_{ij}$. By fusing Eq.\,(\ref{eq:neg_exp_term}), the theoretical optimal solution of the objective~(\ref{eq:expected_obj}) is:
\begin{equation}\label{eq:pre_optimalSolution}
  \mathbf{v}_j \cdot \mathbf{v}_i = \log \big(\frac{p_{ij}\mathcal{D}}{d_id_j}\big) - \log \big(k\big).
\end{equation}

The optimal embedding in Eq.\,(\ref{eq:pre_optimalSolution}) fails to preserve the exact relationships for specified structural information within the embedding space.

\subsection{Training Algorithm}\label{sec:train_alg}
The overall training algorithm of SE-PrivGEmb is shown in Algorithm~\ref{alg:PrivSkipGram}. 
We compute the node proximity matrix for a graph $G$ (line~\ref{code:comp_prox}), and divide this graph into sets of disjoint subgraphs $G_\mathcal{S} = \{\mathcal{S}_1, \mathcal{S}_2, \cdots, \mathcal{S}_{|E|}\}$ (line~\ref{code:graph_subset}), where each $\mathcal{S}_i$ obtained through Algorithm~\ref{Alg:edge_samp} consists of an edge and its corresponding negative samples\footnote{To facilitate privacy analysis in Section~\ref{sec:priv_analy}, we collect positive and negative samples prior to training. An alternative solution is to implement Algorithm~\ref{Alg:edge_samp} during the training of Algorithm~\ref{alg:PrivSkipGram}. However, the sampling probability needed for privacy amplification involves more complex analysis, which we will address in future work.}. 
We initialize the weight matrices $\mathbf{W}_{in}$ and $\mathbf{W}_{out}$ (line~\ref{code:weight_init}). In each epoch, we sample $B$ subgraphs uniformly at random according to Algorithm~\ref{Alg:edge_samp} (line~\ref{code:samp_edges}). Next, it updates $\mathbf{W}_{in}$ according to $\widetilde{\nabla}_{\mathbf{v}_i}L_{nov}^B$ (line~\ref{code:update_w_in}), and updates $\mathbf{W}_{out}$ according to $\widetilde{\nabla}_{\mathbf{v}_j}L_{nov}^B$ (line~\ref{code:update_w_out}). Throughout the training process, the algorithm also updates the overall privacy cost $\epsilon$ to ensure it does not exceed the target privacy budget (lines~\ref{code:cal_RDP}-\ref{code:stop_opt}). Finally, after completing all epochs, the algorithm returns differentially private $\mathbf{W}_{in}$ and $\mathbf{W}_{out}$.

\begin{algorithm}
\caption{Generating Disjoint Subgraphs}\label{Alg:edge_samp}
\KwIn{graph $G$.}
\KwOut{Set of subgraphs, $G_\mathcal{S}$.}
Set $G_\mathcal{S}=\{\}$ and $\mathcal{S}=\{\}$\;
\For{$(i,j) \in E$}
{
    Assign $(v_i,v_j)$ to $\mathcal{S}$\;
    \For{$n = 1, 2, \cdots, k$}
    {\label{code:NegSam_start}
        \While{\text{True}}
        {
        $v_n\leftarrow$ randomly sample one node from $V$\;
        \If{$(v_i,v_n)$ is not in $E$}
        {
            Break\;
        }
        }
        Assign $(v_i,v_n)$ to $\mathcal{S}$\;
    }\label{code:NegSam_end}
    Assign $\mathcal{S}$ to $G_\mathcal{S}$\;
}
\Return $G_\mathcal{S}$\;
\end{algorithm}

\begin{algorithm}
\caption{SE-PrivGEmb Algorithm}\label{alg:PrivSkipGram}
\KwIn{
privacy parameters $\epsilon$ and $\delta$,
standard deviation $\sigma$,
learning rate $\eta$,
embedding dimension $r$,
negative sampling number $k$,
batch size $B$,
gradient clipping threshold $C$,
number of maximum training epochs $n^{epoch}$.}
\KwOut{Differentially Private $\mathbf{W}_{in}$, $\mathbf{W}_{out}$.}
Compute the node proximity matrix for a graph $G$\;
\label{code:comp_prox}
Divide the graph $G$ into sets of disjoint subgraphs $G_\mathcal{S} = \{\mathcal{S}_1, \mathcal{S}_2, \cdots, \mathcal{S}_{|E|}\}$ based on Algorithm~\ref{Alg:edge_samp}\;
\label{code:graph_subset}
Initialize the weight matrices $\mathbf{W}_{in}$ and $\mathbf{W}_{out}$\;
\label{code:weight_init}
\For{$epoch = 0$; $epoch < n^{epoch}$}
{
    \tcp{Generate samples}
    Sample $B$ subgraphs uniformly at random from $G_\mathcal{S}$\;\label{code:samp_edges}
    \tcp{Optimize weights}
    Update $\mathbf{W}_{in}$ according to
    $\widetilde{\nabla}_{\mathbf{v}_i}L_{nov}^B$ in Eq.\,(\ref{eq:nonZeroNoiseGra_on_vi})\;
    \label{code:update_w_in}
    Update $\mathbf{W}_{out}$ according to
    $\widetilde{\nabla}_{\mathbf{v}_j}L_{nov}^B$ in Eq.\,(\ref{eq:nonZeroNoiseGra_on_vi})\;
    \label{code:update_w_out}
    \tcp{Update privacy accountant of RDP}
    Calculate RDP with sampling probability $\frac{B}{|E|}$\;
    \label{code:cal_RDP}
    $\hat{\delta}\leftarrow$ get privacy spent given the target $\epsilon$\;
    Stop optimization if $\hat{\delta} \geq \delta$\;
    \label{code:stop_opt}
}
\Return $\mathbf{W}_{in}$, $\mathbf{W}_{out}$\;
\end{algorithm}
\section{Privacy and Complexity Analysis}\label{subsec:priv_complex}
\subsection{Privacy Analysis}\label{sec:priv_analy}
Subsampling introduces a non-zero probability of an added or modified sample not to be processed by the randomized algorithm. Random sampling will enhance privacy protection and reduce privacy loss~\cite{wang2019subsampled,zhu2019poission,mironov2019r}. 
In this paper, we focus on the ``subsampling without replacement" setup, which adheres to the following privacy amplification theorem for $(\epsilon,\delta)$-DP.

\begin{definition}[Subsample~\cite{wang2019subsampled}]
Given a dataset $X$ containing $n$ points, the subsample procedure selects a random sample from the uniform distribution over all subsets of $X$ with size $m$. The ratio $\gamma=m/n$ is termed as the sampling parameter of the subsample procedure.
\end{definition}

\begin{theorem}[RDP for Subsampled Mechanisms~\cite{wang2019subsampled}]\label{theo:subsample}
Given a dataset of $n$ points drawn from a domain $\mathcal{X}$ and a mechanism $\mathcal{A}$ that accepts inputs from $\mathcal{X}^m$ for $m \leq n$, we consider the randomized algorithm $\mathcal{A}$ for subsampling, which is defined as follows: (1) sample $m$ data points without replacement from the dataset, where the sampling parameter is $\gamma = m / n$, and (2) apply $\mathcal{A}$ to the subsampled dataset. For all integers $\alpha \geq 2$, if $\mathcal{A}$ satisfies $(\alpha, \epsilon(\alpha))$ RDP, then the subsampled mechanism $\mathcal{A} \circ \rm{subsample}$ satisfies $\left(\alpha, \epsilon^{\prime}(\alpha)\right)$-RDP in which
\begin{equation*}\label{eq:RDP_subsamp}
\begin{split}
\epsilon^{\prime}(\alpha) \leq \frac{1}{\alpha-1} \log \big(1+\gamma^2\big(\begin{array}{l}
\alpha \\
2
\end{array}\big) \min \big\{4\big(e^{\epsilon(2)}-1\big), \\
e^{\epsilon(2)} \min \big\{2,\big(e^{\epsilon(\infty)}-1\big)^2\big\}\big\} \\
+\sum_{j=3}^\alpha \gamma^j\big(\begin{array}{l}
\alpha \\
j
\end{array}\big) e^{(j-1) \epsilon(j)} \min \big\{2,\big(e^{\epsilon(\infty)}-1\big)^j\big\}\big).
\end{split}
\end{equation*}
\end{theorem}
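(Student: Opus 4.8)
The plan is to bound, for a worst-case pair of neighboring datasets $X, X'$, the exponentiated R\'enyi divergence $C_\alpha(p \| q) := \mathbb{E}_{q}[(p/q)^\alpha]$, where $p$ and $q$ are the output densities of $\mathcal{A}\circ\mathrm{subsample}$ on $X$ and $X'$; the claimed RDP guarantee is then $\frac{1}{\alpha-1}\log C_\alpha(p\|q)$, consistent with the definition of $D_\alpha$ used in the excerpt. The crucial structural observation is that a subsample of size $m$ either contains the single index $\star$ at which $X$ and $X'$ differ or it does not, and under sampling without replacement these events occur with probabilities $\gamma = m/n$ and $1-\gamma$. Conditioning on this event lets me write $p = (1-\gamma)p_{0} + \gamma p_{1}$ and $q = (1-\gamma)p_{0} + \gamma q_{1}$, where $p_{0}$ is the \emph{shared} mixture over subsamples avoiding $\star$ (identical for $X$ and $X'$), while $p_{1}$ and $q_{1}$ are the mixtures over subsamples containing $\star$, coupled subset-by-subset so that each paired $(\mathcal{A}(X_S), \mathcal{A}(X'_S))$ is a pair of neighboring base-mechanism outputs. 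Hence $p - q = \gamma(p_{1} - q_{1})$.

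Next I would exploit that the theorem restricts to \emph{integer} $\alpha \ge 2$, which is precisely what makes a finite binomial expansion available. Writing $p/q = 1 + \gamma (p_{1}-q_{1})/q$ and expanding,
\begin{equation*}
C_\alpha(p\|q) = \sum_{j=0}^{\alpha} \binom{\alpha}{j}\gamma^{j}\,\mathbb{E}_{q}\!\left[\left(\frac{p_{1}-q_{1}}{q}\right)^{j}\right].
\end{equation*}
The $j=0$ term equals $1$, and the $j=1$ term vanishes since $\int (p_{1}-q_{1}) = 1-1 = 0$. This leaves $C_\alpha(p\|q) = 1 + \sum_{j=2}^{\alpha}\binom{\alpha}{j}\gamma^{j} T_j$ with $T_j := \mathbb{E}_{q}[((p_{1}-q_{1})/q)^{j}]$, which already reproduces the $\gamma^{j}\binom{\alpha}{j}$ prefactors in the statement; the entire problem reduces to a term-by-term bound on $T_j$.

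For each $T_j$ I would develop two complementary bounds and take their minimum, matching the $\min\{\cdot\}$ in the statement. The first uses the pure $(\epsilon(\infty),0)$-DP guarantee: pointwise $e^{-\epsilon(\infty)} \le \mathcal{A}(X_S)/\mathcal{A}(X'_S) \le e^{\epsilon(\infty)}$ for every coupled pair, which after averaging controls $|p_{1}-q_{1}|$ and yields a factor of order $(e^{\epsilon(\infty)}-1)^{j}$, with the crude constant $2$ as a fallback when this is large. The second uses order-$j$ RDP: since $C_j(\mathcal{A}(X_S)\|\mathcal{A}(X'_S)) \le e^{(j-1)\epsilon(j)}$, expanding $(p_{1}-q_{1})^{j}$ and integrating the cross terms $\int p_{1}^{l} q_{1}^{j-l}/q^{j-1}$ against the order-$j$ moment bound produces the $e^{(j-1)\epsilon(j)}$ factor. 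The case $j=2$ I would treat separately and more tightly: $T_2 = \int (p_{1}-q_{1})^2/q$ is essentially a $\chi^2$-type quantity, and since $\chi^2(\mathcal{A}(X_S)\|\mathcal{A}(X'_S)) = C_2 - 1 \le e^{\epsilon(2)}-1$, careful bookkeeping of the three squared terms yields the constant $4(e^{\epsilon(2)}-1)$ alongside the alternative $e^{\epsilon(2)}\min\{2,(e^{\epsilon(\infty)}-1)^2\}$. Substituting these bounds, taking $\log$, and dividing by $\alpha-1$ gives the claim.

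\textbf{The main obstacle} is the term-by-term control of $T_j$. The naive pointwise estimate $|p_{1}-q_{1}| \le (e^{\epsilon(\infty)}-1)q_{1}$ combined with $q \ge \gamma q_{1}$ loses a spurious factor $\gamma^{-(j-1)}$, which would destroy the required $\gamma^{j}$ scaling; the real work is to recover a $\gamma$-independent bound on $T_j$ by exploiting the mean-zero property of $p_{1}-q_{1}$ together with the subset-by-subset coupling, passing from per-subset divergences to the mixtures via H\"older's inequality and convexity arguments. Pinning down the exact constants in the $j=2$ term and the $\min\{2,(e^{\epsilon(\infty)}-1)^j\}$ fallback is the delicate part; the remainder of the argument is routine bookkeeping around the binomial expansion.
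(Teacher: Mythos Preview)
The paper does not prove this theorem at all: it is quoted as a cited result from Wang et al.\ (reference~\cite{wang2019subsampled}) and used as a black box in the proof of Theorem~\ref{PrivTheorem}. There is therefore no ``paper's own proof'' to compare against.

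That said, your sketch is essentially the argument from the original Wang et al.\ paper: write the subsampled output as a $(1-\gamma,\gamma)$-mixture, note the shared component makes $p-q=\gamma(p_1-q_1)$, use the integer-$\alpha$ binomial expansion of $(1+\gamma(p_1-q_1)/q)^\alpha$, kill the $j=0,1$ terms, and bound each $T_j$ by the better of a pure-DP-based $(e^{\epsilon(\infty)}-1)^j$ estimate and an order-$j$ RDP moment estimate, with a separate tighter analysis at $j=2$ via the $\chi^2$-divergence identity $C_2-1=\chi^2$. Your identification of the main technical issue---getting a $\gamma$-free bound on $T_j$ rather than losing $\gamma^{-(j-1)}$ from the crude $q\ge\gamma q_1$---is also correct; in the source this is handled via a ternary-divergence argument and convexity that lets one replace the denominator $q$ by $q_1$ before applying the per-subset RDP bounds. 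So your proposal is sound as a proof of the cited theorem, but it is not something this paper undertakes.
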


Following Theorem~\ref{theo:subsample}, we conduct a privacy analysis of the generated $\mathbf{W}_{in}$ and $\mathbf{W}_{out}$ in Algorithm~\ref{alg:PrivSkipGram}. We adopt the functional perspective of RDP, where $\epsilon$ is a function of $\alpha$, with $1<\alpha<\infty$, and this function is determined by the private algorithm. For ease of presentation, we replace $\left(\alpha, \epsilon^\prime(\alpha)\right)$ with $\left(\alpha, \epsilon^\gamma(\alpha)\right)$ in the following proof, where $\gamma$ denotes the sampling probability.

\begin{theorem}\label{PrivTheorem}
Given the batch size $B$ and the total number of edges $|E|$ in the graph, the outputs of Algorithm~\ref{alg:PrivSkipGram} satisfy node-level
$\big(\alpha, n^{epoch}\epsilon^{\frac{B}{|E|}}(\alpha)\big)$-RDP after completing $n^{epoch}$ training iterations.
\end{theorem}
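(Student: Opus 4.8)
The plan is to derive the bound by assembling three facts already available in the paper: the RDP guarantee of the Gaussian mechanism (Section~\ref{pre:diff_priv}), the subsampling amplification of Theorem~\ref{theo:subsample}, and the sequential-composition property of RDP. Structurally, I would first establish a per-iteration RDP guarantee for one pass of the training loop and then compose across the $n^{epoch}$ epochs. The first reduction is to isolate the only privacy-sensitive operation inside an epoch, namely the noisy clipped-gradient aggregates of Eq.~(\ref{eq:nonZeroNoiseGra_on_vi}) used to update $\mathbf{W}_{in}$ (line~\ref{code:update_w_in}) and $\mathbf{W}_{out}$ (line~\ref{code:update_w_out}). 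The learning-rate scaling, the additive matrix update, and the eventual return of $\mathbf{W}_{in},\mathbf{W}_{out}$ are all data-independent post-processing of these noisy aggregates, so by robustness to post-processing it suffices to bound the RDP of the noisy gradient sum itself.

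For that aggregate I would invoke the Gaussian-mechanism RDP result. Because the noise in Eq.~(\ref{eq:nonZeroNoiseGra_on_vi}) is $\widetilde{\mathcal{N}}(S_{\nabla_{\mathbf{v}}}^2\sigma^2\mathbf{I})$ — that is, its scale is calibrated to the node-level $\ell_2$-sensitivity $S_{\nabla_{\mathbf{v}}}$ of the clipped gradient — the sensitivity cancels in the Gaussian formula, giving a base per-step curve $\epsilon(\alpha)=\frac{\alpha S_{\nabla_{\mathbf{v}}}^2}{2 S_{\nabla_{\mathbf{v}}}^2\sigma^2}=\frac{\alpha}{2\sigma^2}$, independent of the actual sensitivity value provided the noise is correctly scaled. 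I would treat the gradient step on the full subgraph collection as a base mechanism $\mathcal{A}$ with this RDP curve $\epsilon(\alpha)$.

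Next I would account for the sampling in line~\ref{code:samp_edges}. Since Algorithm~\ref{alg:PrivSkipGram} first partitions the graph into the $|E|$ disjoint subgraphs $G_\mathcal{S}=\{\mathcal{S}_1,\dots,\mathcal{S}_{|E|}\}$ and each epoch draws $B$ of them uniformly without replacement, the batch selection is exactly the subsample procedure with sampling parameter $\gamma=B/|E|$. Applying Theorem~\ref{theo:subsample} to $\mathcal{A}$ then upgrades the per-iteration guarantee to $\big(\alpha,\epsilon^{B/|E|}(\alpha)\big)$-RDP in the paper's notation. Finally, recognizing that the $n^{epoch}$ epochs reuse the same underlying graph and apply fresh, independent Gaussian noise at each step, I would invoke RDP sequential composition to add the per-iteration costs, yielding the claimed $\big(\alpha, n^{epoch}\epsilon^{B/|E|}(\alpha)\big)$-RDP.

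The hard part will be justifying the node-level sensitivity $S_{\nabla_{\mathbf{v}}}$ and reconciling it with the per-record form of Theorem~\ref{theo:subsample}. Under the node-level neighboring relation a single node $z$ induces up to $d_z$ incident edges and therefore influences many subgraphs at once, so the aggregate sensitivity is not the naive per-edge clip $C$; this is precisely the $BC$ blow-up flagged for the first-cut solution in Section~\ref{sec:naive_method}. I would argue that the disjoint-subgraph construction of Algorithm~\ref{Alg:edge_samp} confines each node's influence to whole subgraphs and that gradient clipping caps each such contribution at $C$, so a node-level change perturbs the summed gradient by a controlled amount that is absorbed into the calibrated noise scale $S_{\nabla_{\mathbf{v}}}$. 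Making this reduction precise — identifying the subgraph as the atomic record so that the ``without replacement'' amplification applies to the node-level (rather than edge-level) change, and verifying that the non-zero-vector perturbation of Section~\ref{sec:nonZero_Perturb} does not enlarge $S_{\nabla_{\mathbf{v}}}$ — is the step I expect to demand the most care.
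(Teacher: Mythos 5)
Your proposal follows essentially the same route as the paper's proof: establish a per-iteration $(\alpha,\epsilon(\alpha))$-RDP guarantee for the noisy clipped-gradient step via the Gaussian mechanism, amplify it by subsampling with $\gamma=B/|E|$ using Theorem~\ref{theo:subsample}, and then apply sequential composition over the $n^{epoch}$ iterations (the paper additionally invokes Theorem~\ref{Theo:RDP_to_DP} at the end, and states the base curve as $\epsilon(\alpha)=\frac{\alpha S_{\nabla_{\mathbf{v}}}^2}{2\sigma^2}$ rather than your cancelled form $\frac{\alpha}{2\sigma^2}$, a calibration detail rather than a structural difference). The ``hard part'' you rightly flag --- that a node-level change can touch many of the disjoint subgraphs at once, so the per-record subsampling amplification and the value of $S_{\nabla_{\mathbf{v}}}$ need a careful node-to-record reduction --- is not actually resolved in the paper's proof either, which simply asserts the per-iteration bound and applies Theorem~\ref{theo:subsample} directly.
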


\begin{proof}
Without considering privacy amplification, every iteration $epoch$ of Algorithm~\ref{alg:PrivSkipGram} is $(\alpha, \epsilon(\alpha))$-RDP where $\epsilon(\alpha)=\frac{\alpha \cdot S_{\nabla_{\mathbf{v}}}^2}{2 \sigma^2}$, which can be directly derived from Corollary 3 in~\cite{mironov2017renyi}. Since we take $B$ subgraphs uniformly at random from $G_\mathcal{S}$, by setting $\gamma$ to $\frac{B}{|E|}$, the privacy guarantee of the outputs of Algorithm~\ref{alg:PrivSkipGram} can be directly analyzed by Theorem~\ref{theo:subsample}, yielding $\left(\alpha, \epsilon^\gamma(\alpha)\right)$-RDP for every $epoch$. After $n^{epoch}$ iterations, the outputs of Algorithm~\ref{alg:PrivSkipGram} satisfy node-level
$\big(\alpha, n^{epoch}\epsilon^{\frac{B}{|E|}}(\alpha)\big)$-RDP. Finally, we apply the conversion rule in Theorem~\ref{Theo:RDP_to_DP} to convert the RDP back to DP.
\end{proof}

\subsection{Complexity Analysis}
We analyze the computational complexity of each step of SE-PrivGEmb. 
DeepWalk proximity~\cite{yang2015network} and node degree proximity are used in experiments. Computing DeepWalk proximity~\cite{yang2015network} takes $\mathcal{O}(|V|^2)$ time, and computing node degree proximity takes $\mathcal{O}(|V|)$ time.
The time complexity of dividing the graph $G$ into sets of subgraphs is $|E|k$. 
Initializing the weight matrices $\mathbf{W}_{in}$ and $\mathbf{W}_{out}$ is constant time $\mathcal{O}(1)$. 
The time complexity of updating $\mathbf{W}_{in}$ depends on the embedding dimension $r$ and the number of batch samples, which can be approximated as $\mathcal{O}(rB)$. Similarly, the time complexity of updating $\mathbf{W}_{out}$ can be approximated as $\mathcal{O}(rB)$. The time complexity of updating the DP cost using RDP depends on the specific implementation of RDP. Different versions of RDP may result in slight differences in time complexity, but according to~\cite{bu2023differentially}, these implementations all have asymptotic complexity of $\mathcal{O}(rB\gamma)$, where $\gamma$ denotes the sampling probability. After $n^{epoch}$ epochs, the total time complexity is $\mathcal{O}(|V|^2 + |E|k + n^{epoch}rB + n^{epoch}rB\gamma)$ when DeepWalk proximity is used. Alternatively, when node degree proximity is used, the total time complexity becomes $\mathcal{O}(|V| + |E|k + n^{epoch}rB + n^{epoch}rB\gamma)$.  

\section{Experiments}\label{Exp_Section}
In this section, we evaluate the performance of SE-PrivGEmb in two downstream tasks: structural equivalence and link prediction. The former demonstrates the ability of our proposed method to recover structural equivalence. It refers to a concept wherein two nodes within a graph are deemed structurally equivalent if they possess identical connections with the same nodes, \emph{which is widely acknowledged as the simplest and most rigorous metric}~\cite{jin2021toward}.
On the other hand, link prediction is a commonly used benchmark task in graph learning models, which is used to further display the performance of SE-PrivGEmb. Our specific goal is to answer the following four questions:

\begin{itemize}
\item How much do the parameters impact SE-PrivGEmb's performance? (See Section~\ref{exp:para_impact})
\item How much do the perturbation strategies impact SE-PrivGEmb's performance? (See Section~\ref{exp:perturb_strategy})
\item How much does the privacy budget affect SE-PrivGEmb's performance in structural equivalence? (See Section~\ref{exp:struc_equ})
\item How much does the privacy budget affect SE-PrivGEmb's performance in link prediction? (See Section~\ref{exp:link_pred})
\end{itemize}

\subsection{Experimental Setup}
\textbf{Datasets.}
We run experiments on six real-world datasets, Chameleon, PPI, Power, Arxiv, BlogCatalog, and DBLP.  
Since we focus on simple graphs in this work, all datasets are pre-processed to remove self-loops. The details of the datasets are as follows.

\begin{itemize}
\item \textbf{Chamelon\footnote{\url{https://snap.stanford.edu/data/wikipedia-article-networks.html}}}.
The dataset is collected from the English Wikipedia on the chamelon topic with 2,277 nodes and 31,421 edges, where nodes represent articles and edges indicate mutual links between them.
\item \textbf{PPI}~\cite{stark2006biogrid}.
This is a human Protein-Protein Interaction network with 3,890 nodes and 76,584 edges, where nodes represent proteins and edges indicate interactions between these proteins.
\item \textbf{Power\footnote{\url{http://konect.cc/networks/opsahl-powergrid/}}}.
This dataset represents an electrical grid that stretches across the western United States with 4,941 nodes and 6,594 edges. The nodes represent the buses within the grid, and the edges signify the transmission lines that connect these buses.
\item \textbf{Arxiv\footnote{\url{https://snap.stanford.edu/data/ca-GrQc.html}}}.
This network with 5,242 nodes and 14,496 edges is derived from the e-print arXiv and specifically examines scientific collaborations among authors who submit papers to the Astrophysics category.
\item \textbf{BlogCatalog\footnote{\url{http://datasets.syr.edu/datasets/BlogCatalog3.html}}}.
This network visualizes the intricate web of social interactions among online bloggers with 10,312 nodes and 333,983 edges, where nodes represent individual bloggers and edges denote the connections or relationships between them.
\item \textbf{DBLP\footnote{\url{https://www.aminer.cn/citation}}}.
This dataset illustrates a scholarly network with 2,244,021 nodes and 4,354,534 edges in which nodes represent papers, authors, and venues, while edges indicate authorship relationships and the publication venues of the papers.
\end{itemize}

\textbf{Evaluation Metrics.}
we measure the performance of SE-PrivGEmb by examining its effectiveness in two downstream tasks: structural equivalence and link prediction.
\begin{itemize}
    \item For structural equivalence, two nodes are considered to be structurally equivalent if they have many of the same neighbors. To evaluate the ability of an embedding method to recover structural equivalence, we introduce a distance metric. The distance $dist(\mathbf{A}_i, \mathbf{A}_j)$ is defined as the difference between the lines of the adjacency matrix for each pair of nodes, $(v_i, v_j)$, whereas $dist(\mathbf{Y}_i, \mathbf{Y}_j)$ represents the distance between their corresponding node vectors in the embedding space. To quantify structural equivalence, we calculate the correlation coefficient (i.e., Pearson) between these values for all node pairs. In this paper, we define $\text{StrucEqu} = pearson(dist(\mathbf{A}_i, \mathbf{A}_j), dist(\mathbf{Y}_i, \mathbf{Y}_j))$, where the Euclidean distance is used.
    \item For the link prediction task, as with~\cite{zhang2018link}, the existing links in each dataset are randomly divided into a training set (90\%) and a test set (10\%).
    To evaluate the performance of link prediction, we randomly select an equal number of node pairs without connected edges as negative test links for the test set. Additionally, for the training set, we sample the same number of node pairs without edges to construct negative training data. The performance metric is the area under the ROC curve (AUC).
\end{itemize}

We measure each result over ten experiments and report the average value. A higher value of either \textbf{StrucEqu} or \textbf{AUC} indicates better utility.

\textbf{Competitive Methods.}
In this paper, we design two PrivEmb methods\footnote{Our code is available at \url{https://github.com/sunnerzs/SEPrivGEmb}.} fusing DeepWalk proximity~\cite{yang2015network} and node degree proximity: \textbf{SE-PrivGEmb$_{DW}$} and \textbf{SE-PrivGEmb$_{Deg}$}. 
To establish a baseline for comparison, we utilize four state-of-the-art private graph learning methods, namely
DPGGAN~\cite{yang2020secure},
DPGVAE~\cite{yang2020secure},
GAP~\cite{sajadmanesh2023gap},
and ProGAP~\cite{sajadmanesh2023progap}.
Moreover, we also use SE-GEmb$_{DW}$ and SE-GEmb$_{Deg}$ as non-private counterparts to SE-PrivGEmb$_{DW}$ and SE-PrivGEmb$_{Deg}$, respectively, to better show the performance of SE-PrivGEmb.
In this study, we simulate a scenario where the graphs only contain structural information, while GAP and ProGAP rely on node features. To ensure a fair evaluation, similar to prior research~\cite{du2022understanding}, we use randomly generated features as inputs for both methods.

\textbf{Parameter Settings.}
To demonstrate the performance of our proposed method across different downstream tasks, we set the same parameters for structural equivalence and link prediction except for the training epochs. Specifically, we set the training epochs to \text{$n^{epoch}=200$} for structural equivalence and \text{$n^{epoch}=2000$} for link prediction. The embedding dimension is fixed at \text{$r=128$}. It is worth noting that we do not specifically highlight the impact of $r$ as it is commonly used in various network embedding methods~\cite{perozzi2014deepwalk,tang2015line,lai2017prune,tu2018unified}.
For the privacy parameters, as with~\cite{yang2020secure}, we fix \text{$\delta=10^{-5}$} and \text{$\sigma=5$}, and vary the privacy budget $\epsilon$ among the values $\{0.5, 1, 1.5, 2, 2.5, 3, 3.5\}$ for the private methods. Additionally, we vary the batch size $B$, learning rate $\eta$, and the parameters $C$ and $k$ to verify their impact on the utility of SE-PrivGEmb. To ensure consistency with the original papers, we utilize the official GitHub implementations for DPGGAN, DPGVAE, GAP and ProGAP, and replicate the experimental setups described in those papers.

\subsection{Impact of Parameters}\label{exp:para_impact}
In this section, we vary the batch size $B$, learning rate $\eta$, and the parameters $C$ and $k$ to determine suitable values for the structural equivalence task. Given the privacy budget \text{$\epsilon=3.5$}, our experiments are conducted on three datasets: Chameleon, Power, and Arxiv.

\subsubsection{\textbf{Parameter $B$}} 
In this experiment, we investigate the impact of the parameter $B$ in Table~\ref{Tab:impact_of_batchSize}. For SE-PrivGEmb$_{DW}$, the results indicate that for the Chameleon dataset, \text{$B=64$} is optimal, achieving the highest StrucEqu value of 0.4599 with a relatively small standard deviation (SD) value of 0.0530. In the case of the Power dataset, \text{$B=128$} is recommended, as it produces a higher StrucEqu value of 0.2522, with a comparable SD value of 0.0543. Similarly, on the Arxiv dataset, \text{$B=128$} performs well, yielding a StrucEqu value of 0.3457 and an SD value of 0.0303. For SE-PrivGEmb$_{Deg}$, we can observe that on the Chameleon dataset, \text{$B=1024$} emerges as the best choice, achieving a StrucEqu value of 0.3967 and an SD value of 0.0535. On the Power dataset, \text{$B=128$} yields a StrucEqu value of 0.2322 and an SD value of 0.0476. On the Arxiv dataset, \text{$B=128$} remains suitable, achieving a StrucEqu value of 0.2341 and an SD value of 0.0193. In summary, for both SE-PrivGEmb$_{DW}$ and SE-PrivGEmb$_{Deg}$, \text{$B=128$} consistently proves to be a suitable choice. Therefore, we set the batch size $B$ to 128 in the subsequent experiments.
\begin{table}[]
\centering
\caption{Summary of StrucEqu values with different $B$, given $\epsilon=3.5$ \\
(Result: average StrucEqu $\pm$ SD; \textbf{Bold}: best)}
\label{Tab:impact_of_batchSize}
\begin{tabular}{c|ccc}
\hline
                      & \multicolumn{3}{c}{SE-PrivGEmb$_{DW}$} \\ \cline{2-4}
\multirow{-2}{*}{$B$} & Chameleon               & Power               & Arxiv                    \\ \hline
32                    & 0.4281$\pm$0.0529 & 0.1549$\pm$0.0196 & 0.3375$\pm$0.0640
\\
64                    & \textbf{0.4599$\pm$0.0530} & 0.1815$\pm$0.0327 & 0.3376$\pm$0.0871
\\
128                   & 0.4507$\pm$0.0341 & \textbf{0.2522$\pm$0.0543} &  \textbf{0.3457$\pm$0.0303} \\
256                   & 0.3701$\pm$0.0181 & 0.2053$\pm$0.0670 & 0.3326$\pm$0.0564 \\
512                   & 0.3293$\pm$0.0324 & 0.2462$\pm$0.0508 & 0.3411$\pm$0.0031 \\
1024                  & 0.2780$\pm$0.0406 & 0.2520$\pm$0.0157 & 0.3416$\pm$0.0161 \\ \hline
                      & \multicolumn{3}{c}{SE-PrivGEmb$_{Deg}$} \\ \cline{2-4}
\multirow{-2}{*}{$B$} & Chameleon               & Power               & Arxiv        \\ \hline
32                    & 0.2611$\pm$0.0714 & 0.1127$\pm$0.0124 & 0.1394$\pm$0.0815 \\
64                    & 0.3084$\pm$0.0172 & 0.1216$\pm$0.0581 & 0.1625$\pm$0.0722 \\
128                   & 0.3661$\pm$0.0092 & \textbf{0.2322$\pm$0.0476} & \textbf{0.2341$\pm$0.0193} \\
256                   & 0.3956$\pm$0.0090 & 0.1595$\pm$0.0418 & 0.2260$\pm$0.0292 \\
512                   & 0.3710$\pm$0.0834 & 0.1833$\pm$0.0303 & 0.2204$\pm$0.0656 \\
1024                  & \textbf{0.3967$\pm$0.0535} & 0.1773$\pm$0.0317 & 0.2251$\pm$0.0509 \\ \hline
\end{tabular}
\end{table}

\subsubsection{\textbf{Parameter $\eta$}} 
In this experiment, we investigate the impact of the learning rate $\eta$, as summarized in Table~\ref{Tab:impact_of_learnRate}.
For SE-PrivGEmb$_{DW}$, on the Chameleon dataset, \text{$\eta=0.15$} achieves the highest StrucEqu value of 0.4573, with a small SD value of 0.0072. The optimal learning rate for the Power dataset is \text{$\eta=0.1$}, which yields a StrucEqu value of 0.2522 and an SD value of 0.0543. \text{$\eta=0.25$} performs well on the Arxiv dataset, with a StrucEqu value of 0.4353 and an SD value of 0.0198.
For SE-PrivGEmb$_{Deg}$, on the Chameleon dataset, \text{$\eta = 0.1$} is identified as the best choice, achieving a StrucEqu value of 0.3661 and an SD value of 0.0092. On the Power dataset, \text{$\eta = 0.1$} is recommended, resulting in a StrucEqu value of 0.2322 and an SD value of 0.0476. On the Arxiv dataset, \text{$\eta = 0.1$} remains suitable, achieving a StrucEqu value of 0.2341, with an SD value of 0.0193.
Overall, across both SE-PrivGEmb$_{DW}$ and SE-PrivGEmb$_{Deg}$, \text{$\eta = 0.1$} emerges as a consistently suitable choice. It consistently yields higher StrucEqu values across datasets while maintaining relatively small SD values.
\begin{table}[]
\centering
\caption{Summary of StrucEqu values with different $\eta$, given $\epsilon=3.5$ \\
(Result: average StrucEqu $\pm$ SD; \textbf{Bold}: best)}
\label{Tab:impact_of_learnRate}
\begin{tabular}{c|ccc}
\hline
                         & \multicolumn{3}{c}{SE-PrivGEmb$_{DW}$}                                        \\ \cline{2-4}
\multirow{-2}{*}{$\eta$} & Chameleon               & Power               & Arxiv                          \\ \hline
0.01                     & 0.0703$\pm$0.0378 & 0.0515$\pm$0.0180 & 0.0436$\pm$0.0121 \\
0.05                     & 0.4290$\pm$0.0045 & 0.1756$\pm$0.0241 & 0.3989$\pm$0.0311 \\
0.1                      & 0.4507$\pm$0.0341& \textbf{0.2522$\pm$0.0543} & 0.3457$\pm$0.0303  \\
0.15                     & \textbf{0.4573$\pm$0.0072} & 0.2431$\pm$0.0198 & 0.3938$\pm$0.0204 \\
0.2                      & 0.4152$\pm$0.0470 & 0.1912$\pm$0.0070 & 0.4282$\pm$0.0285 \\
0.25                     & 0.4564$\pm$0.0112 & 0.2478$\pm$0.0546 & \textbf{0.4353$\pm$0.0198} \\
0.3                      & 0.4524$\pm$0.0262 & 0.2213$\pm$0.0342 & 0.4011$\pm$0.0154 \\ \hline

                         & \multicolumn{3}{c}{SE-PrivGEmb$_{Deg}$}                                         \\ \cline{2-4}
\multirow{-2}{*}{$\eta$} & Chameleon               & Power               & Arxiv                 \\ \hline
0.01                      & 0.0409$\pm$0.0148 & 0.0266$\pm$0.0331 & 0.0122$\pm$0.0292 \\
0.05                     & 0.3221$\pm$0.0419 & 0.1524$\pm$0.0531 & 0.1876$\pm$0.0445 \\
0.1                      & \textbf{0.3661$\pm$0.0092} & \textbf{0.2322$\pm$0.0476} & \textbf{0.2341$\pm$0.0193} \\
0.15                     & 0.3503$\pm$0.0632 & 0.1742$\pm$0.0598 & 0.1725$\pm$0.0320 \\
0.2                      & 0.3169$\pm$0.0529 & 0.1372$\pm$0.0723 & 0.2189$\pm$0.0445 \\
0.25                     & 0.3656$\pm$0.0552 & 0.1435$\pm$0.0595 & 0.2335$\pm$0.0464 \\
0.3                      & 0.2628$\pm$0.0471 & 0.1229$\pm$0.0085 & 0.2086$\pm$0.0765 \\ \hline
\end{tabular}
\end{table}

\subsubsection{\textbf{Parameter $C$}}
In this experiment, we investigate the impact of the gradient clipping threshold $C$, as shown in Table~\ref{Tab:impact_of_gradClip}.
For SE-PrivGEmb$_{DW}$, on the Chameleon dataset, the highest StrucEqu value of 0.4507 is achieved at \text{$C=2$}, with a relatively low SD value of 0.0341, indicating stable results. Other strong contenders are \text{$C=3$} and \text{$C=4$}, though their StrucEqu values are slightly lower than those at \text{$C=2$}. On the Power dataset, the highest StrucEqu value of 0.2522 appears at \text{$C=2$}, with an SD value of 0.0543, showing superior performance at this setting. While \text{$C=1$} and \text{$C=4$} also exhibit relatively high StrucEqu values, \text{$C=2$} remains notably preferable. On the Arxiv dataset, \text{$C=1$} and \text{$C=5$} show slightly higher StrucEqu values than other $C$ values while maintaining stability. In addition, \text{$C=2$} and \text{$C=6$} are considered balanced options in terms of both StrucEqu and SD. 
For SE-PrivGEmb$_{Deg}$, on the Chameleon dataset, \text{$C=2$} yields the highest StrucEqu value of 0.3661, with an extremely low SD value of 0.0092, indicating highly stable results. Other $C$ values show lower StrucEqu values yet higher SD values, making \text{$C=2$} the optimal choice. On the Power dataset, \text{$C=2$} performs best with the highest StrucEqu value of 0.2322 and a moderate SD value of 0.0476. Although \text{$C=1$} and \text{$C=4$} exhibit relatively high StrucEqu values, \text{$C=2$} remains the preferred option. On the Arxiv dataset, \text{$C=1$} and \text{$C=2$} show favorable combinations of StrucEqu and SD, particularly \text{$C=2$} with a StrucEqu value of 0.2341 and an SD value of 0.0193. In summary, \text{$C=2$} emerges as a suitable choice for both SE-PrivGEmb$_{DW}$ and SE-PrivGEmb$_{Deg}$, consistently delivering high structure preservation and stability across different datasets.

\begin{table}[]
\centering
\caption{Summary of StrucEqu values with different $C$, given $\epsilon=3.5$ \\
(Result: average StrucEqu $\pm$ SD; \textbf{Bold}: best)}
\label{Tab:impact_of_gradClip}
\begin{tabular}{c|ccc}
\hline
                      & \multicolumn{3}{c}{SE-PrivGEmb$_{DW}$}      \\ \cline{2-4}
\multirow{-2}{*}{$C$} & Chameleon               & Power               & Arxiv                              \\ \hline
1                     & 0.3248$\pm$0.0595 & 0.2216$\pm$0.0281 & \textbf{0.4266$\pm$0.0588} \\
2                     & \textbf{0.4507$\pm$0.0341} & \textbf{0.2522$\pm$0.0543} & 0.3457$\pm$0.0303  \\
3                     & 0.4319$\pm$0.0475 & 0.1774$\pm$0.0407 & 0.3514$\pm$0.0219 \\
4                     & 0.4368$\pm$0.0214 & 0.1972$\pm$0.0414 & 0.3555$\pm$0.0142 \\
5                     & 0.4079$\pm$0.0172 & 0.1652$\pm$0.0345 & 0.3671$\pm$0.0352 \\
6                     & 0.3719$\pm$0.0328 & 0.1710$\pm$0.0557 & 0.3476$\pm$0.0305 \\ \hline

                      & \multicolumn{3}{c}{SE-PrivGEmb$_{Deg}$}        \\ \cline{2-4}
\multirow{-2}{*}{$C$} & Chameleon               & Power               & Arxiv                 \\ \hline
1                     & 0.2485$\pm$0.0757 & 0.2030$\pm$0.0846 & 0.2206$\pm$0.0858 \\
2                     & \textbf{0.3661$\pm$0.0092} & \textbf{0.2322$\pm$0.0476} & \textbf{0.2341$\pm$0.0193} \\
3                     & 0.2907$\pm$0.0921 & 0.1140$\pm$0.0308 & 0.1819$\pm$0.0767 \\
4                     & 0.3116$\pm$0.0750 & 0.1820$\pm$0.0541 & 0.2107$\pm$0.0450 \\
5                     & 0.3384$\pm$0.0892 & 0.1731$\pm$0.0260 & 0.1464$\pm$0.0531 \\
6                     & 0.3136$\pm$0.0699 & 0.1763$\pm$0.0482 & 0.2143$\pm$0.0269 \\ \hline
\end{tabular}
\end{table}

\subsubsection{\textbf{Parameter $k$}} 
We also explore the impact of the negative sampling number $k$ within $\{1,2,3,4,5,6,7\}$, as shown in Table~\ref{Tab:impact_of_k}. For SE-PrivGEmb$_{DW}$, all SD values demonstrate comparable performance across various datasets. For the StrucEqu value, on the Chameleon dataset, \text{$k=5$} achieves the highest StrucEqu value of 0.4507. On the Power dataset, StrucEqu values fluctuate across different $k$ values but also peak at \text{$k=5$}. On the Arxiv dataset, the highest StrucEqu value is observed at \text{$k=7$}, with \text{$k=5$} being close to this peak. For SE-PrivGEmb$_{Deg}$, on the Chameleon dataset, \text{$k=6$} achieves a high StrucEqu value of 0.3824, yet it also derives the highest SD value of 0.0574, indicating relatively poor stability. Notably, \text{$k=5$} achieves a StrucEqu value that is competitive with \text{$k=6$}, while maintaining a significantly lower SD value of 0.0092. On the Power dataset, SE-PrivGEmb$_{Deg}$ achieves a significant peak at \text{$k=5$} in terms of StrucEqu value. On the Arxiv dataset, the highest StrucEqu value is observed at \text{$k=7$}, but \text{$k=5$} also demonstrates competitive performance with a lower SD value compared to \text{$k=7$}. Thus, taking into account the performance across various datasets, selecting $k=5$ is a well-balanced choice, as it reliably achieves strong structure preservation while ensuring stability.

\begin{table}[]
\centering
\caption{Summary of StrucEqu values with different $k$, given $\epsilon=3.5$ \\
(Result: average StrucEqu $\pm$ SD; \textbf{Bold}: best)}
\label{Tab:impact_of_k}
\begin{tabular}{c|ccc}
\hline
                      & \multicolumn{3}{c}{SE-PrivGEmb$_{DW}$}        \\ \cline{2-4}
\multirow{-2}{*}{$k$} & Chameleon               & Power                & Arxiv                      \\ \hline
1                     & 0.4024$\pm$0.0249 & 0.2036$\pm$0.0766 & 0.3423$\pm$0.0228  \\
2                     & 0.4121$\pm$0.0339 & 0.2033$\pm$0.0249 & 0.3500$\pm$0.0450  \\
3                     & 0.4325$\pm$0.0304 & 0.1913$\pm$0.0132 & 0.3126$\pm$0.0447 \\
4                     & 0.4190$\pm$0.0670 & 0.2441$\pm$0.0177 & 0.3801$\pm$0.0421  \\
5                     & \textbf{0.4507$\pm$0.0341} & \textbf{0.2522$\pm$0.0543} & 0.3457$\pm$0.0303  \\
6                     & 0.4053$\pm$0.0501 & 0.2108$\pm$0.0619 & 0.3599$\pm$0.0456  \\
7                     & 0.3916$\pm$0.0692 & 0.2010$\pm$0.0565 & \textbf{0.4247$\pm$0.0391}  \\ \hline

                      & \multicolumn{3}{c}{SE-PrivGEmb$_{Deg}$}          \\ \cline{2-4}
\multirow{-2}{*}{$k$} & Chameleon               & Power               & Arxiv                 \\ \hline
1                     & 0.2286$\pm$0.0121 & 0.1425$\pm$0.0279 & 0.2203$\pm$0.0587 \\
2                     & 0.2338$\pm$0.0450 & 0.1542$\pm$0.0202 & 0.2229$\pm$0.0673 \\
3                     & 0.2406$\pm$0.0531 & 0.0839$\pm$0.0752 & 0.2068$\pm$0.0654 \\
4                     & 0.2371$\pm$0.0717 & 0.1987$\pm$0.0714 & 0.1993$\pm$0.0462 \\
5                     & 0.3661$\pm$0.0092 & \textbf{0.2322$\pm$0.0476} & 0.2341$\pm$0.0193 \\
6                     & \textbf{0.3824$\pm$0.0574} & 0.1762$\pm$0.0183 & 0.2232$\pm$0.0076 \\
7                     & 0.3710$\pm$0.0375 & 0.1354$\pm$0.0441 & \textbf{0.2492$\pm$0.0432} \\ \hline
\end{tabular}
\end{table}

\begin{table}[]
\centering
\caption{Impact of Perturbation Strategies in Structural Equivalence \\
(Result: average StrucEqu $\pm$ SD; \textbf{Bold}: best)}
\label{tab:impact_of_Pert}
\begin{tabular}{l|llll}
\hline
\multicolumn{1}{c|}{\multirow{2}{*}{Datasets}} & \multicolumn{2}{c}{SE-PrivGEmb$_{DW}$}                   \\ \cline{2-3} 
                          & \multicolumn{1}{c}{Naive} & \multicolumn{1}{c}{Non-zero} \\ \hline
Chameleon($\epsilon=0.5$)                          & 0.1169$\pm$0.0010       & \textbf{0.3927$\pm$0.0341}         \\
Chameleon($\epsilon=2$)                            & 0.1192$\pm$0.0281       & \textbf{0.4177$\pm$0.0359}        \\
Chameleon($\epsilon=3.5$)                          & 0.1224$\pm$0.0159       &  \textbf{0.4507$\pm$0.0341}         \\ \hline
Power($\epsilon=0.5$)                          & 0.0857$\pm$0.0164      &  \textbf{0.1945$\pm$0.0567}          \\
Power($\epsilon=2$)                            & 0.0788$\pm$0.0227     & \textbf{0.2003$\pm$0.0330}          \\
Power($\epsilon=3.5$)                          & 0.0796$\pm$0.0160       & \textbf{0.2522$\pm$0.0543}       \\ \hline
Arxiv($\epsilon=0.5$)                            & 0.0888$\pm$0.0033       & \textbf{0.1770$\pm$0.0756}          \\
Arxiv($\epsilon=2$)                              & 0.0844$\pm$0.0263     & \textbf{0.2199$\pm$0.0171}            \\
Arxiv($\epsilon=3.5$)                            & 0.0848$\pm$0.0086       & \textbf{0.3457$\pm$0.0303}     \\ \hline

\multicolumn{1}{c|}{\multirow{2}{*}{Datasets}} & \multicolumn{2}{c}{SE-PrivGEmb$_{Deg}$}                \\ \cline{2-3} 
                         & \multicolumn{1}{c}{Naive} & \multicolumn{1}{c}{Non-zero}  \\ \hline
Chameleon($\epsilon=0.5$)                          & 0.1119$\pm$0.0315       & \textbf{0.2914$\pm$0.0188}   \\
Chameleon($\epsilon=2$)                            & 0.1144$\pm$0.0094       & \textbf{0.3392$\pm$0.0367}    \\
Chameleon($\epsilon=3.5$)                          & 0.1258$\pm$0.0330      & \textbf{0.3661$\pm$0.0092}     \\ \hline
Power($\epsilon=0.5$)                          & 0.0638$\pm$0.0013       & \textbf{0.0877$\pm$0.0261}     \\
Power($\epsilon=2$)                            & 0.0852$\pm$0.0094       &  \textbf{0.1542$\pm$0.0231}    \\
Power($\epsilon=3.5$)                          & 0.0833$\pm$0.0151       & \textbf{0.2322$\pm$0.0476}     \\ \hline
Arxiv($\epsilon=0.5$)                            & 0.0779$\pm$0.0267      & \textbf{0.1629$\pm$0.0381}       \\
Arxiv($\epsilon=2$)                              & 0.0611$\pm$0.0130       & \textbf{0.2112$\pm$0.0768}       \\
Arxiv($\epsilon=3.5$)                            & 0.0784$\pm$0.0168      & \textbf{0.2341$\pm$0.0193}      \\ \hline
\end{tabular}
\end{table}

\subsection{Impact of Perturbation Strategies}\label{exp:perturb_strategy}
Table~\ref{tab:impact_of_Pert} presents the results of the structural equivalence task using different noise addition strategies, namely naive perturbation using Eq.\,(\ref{eq:novNoiseGra_on_vi}) and non-zero perturbation using Eq.\,(\ref{eq:nonZeroNoiseGra_on_vi}), under different privacy budgets \text{$\epsilon\in\{0.5, 2, 3.5\}$}. 
Across all three datasets, the non-zero perturbation strategy consistently outperforms the naive perturbation strategy, indicating that our proposed methods (SE-PrivGEmb$_{DW}$ and SE-PrivGEmb$_{Deg}$) are more effective in preserving structural equivalence.
As $\epsilon$ increases, the utility of the results generally improves under both strategies. However, the non-zero perturbation strategy maintains a significant advantage over the naive perturbation strategy, even at very large privacy budgets, demonstrating its robustness and effectiveness in maintaining data utility while ensuring privacy.
\begin{figure*}[]
  \centerline{
  \includegraphics[width=7in]{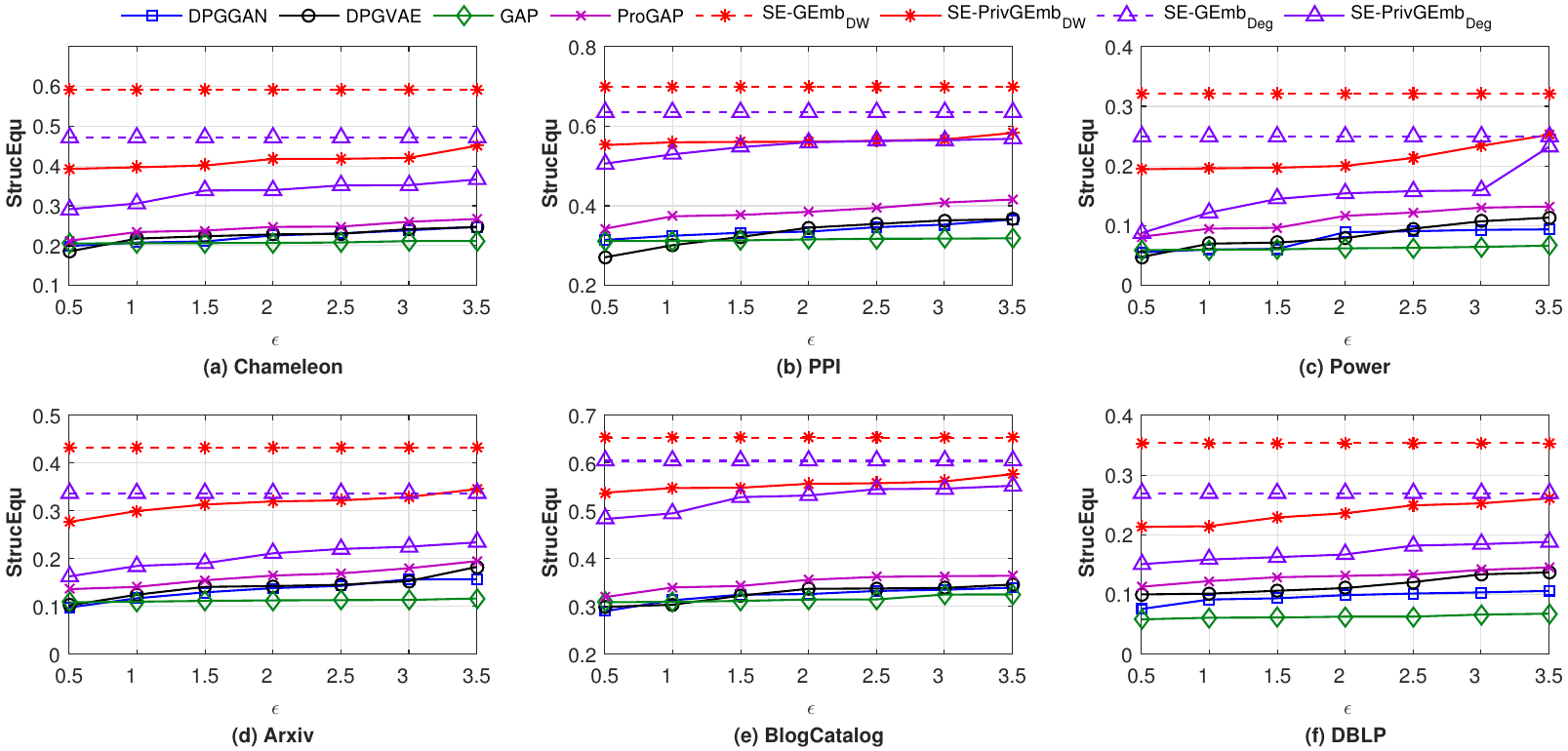}
  }
  \caption{Impact of Privacy Budget on Structural Equivalence}
  \label{fig:struc}
\end{figure*}

\begin{figure*}[]
  \centerline{
  \includegraphics[width=7in]{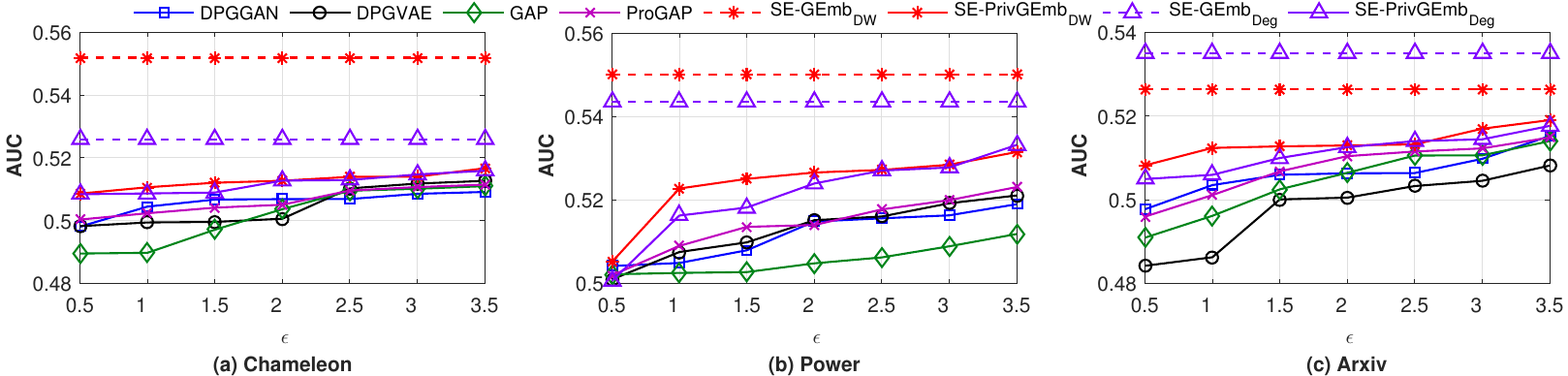}
  }
  \caption{Impact of Privacy Budget on Link Prediction}
  \label{fig:lp}
\end{figure*} 

\subsection{Impact of Privacy Budget on Structural Equivalence}\label{exp:struc_equ}
In the context of DP, the privacy budget $\epsilon$ is a crucial parameter for determining privacy levels. We compare the StrucEqu results of various methods across seven privacy budgets from 0.5 to 3.5. Fig.\,\ref{fig:struc} illustrates the StrucEqu outcomes for structural equivalence derived from eight algorithms. From this figure, we observe a consistent trend: utility generally improves with higher privacy budgets. In particular, SE-PrivGEmb$_{DW}$ and SE-PrivGEmb$_{Deg}$ consistently achieve the highest accuracy in terms of StrucEqu among all privacy-preserving algorithms and maintain competitive performance compared to the non-private SE-GEmb$_{DW}$ and SE-GEmb$_{Deg}$. DPGGAN and DPGVAE frequently yield poor results, primarily because they tend to converge prematurely when using MA, especially when the privacy budget is small. GAP perturbs the aggregate information, and the aggregation perturbation encounters compatibility issues with GNNs. Consequently, all aggregate outputs need to be re-perturbed at each training iteration, resulting in poor performance. ProGAP offers slightly better utility than GAP, and yet its performance remains significantly lower than that of SE-PrivGEmb$_{DW}$ and SE-PrivGEmb$_{Deg}$, particularly at small privacy budgets.

\subsection{Impact of Privacy Budget on Link Prediction}\label{exp:link_pred}
Fig.\,\ref{fig:lp} displays the link prediction results for six private algorithms and two non-private algorithms across three datasets: Chameleon, Power, and Arxiv. It is evident that the non-private SE-GEmb$_{DW}$ and SE-GEmb$_{Deg}$ consistently outperform all private algorithms across all values of $\epsilon$ on three datasets. On the Chameleon dataset, SE-PrivGEmb$_{DW}$ and SE-PrivGEmb$_{Deg}$ outperform the remaining private algorithms across all values of $\epsilon$. DPGGAN and DPGVAE outperform GAP at \text{$\epsilon\in\{0.5,1,1.5\}$}, while DPGVAE and GAP excel against DPGGAN under \text{$\epsilon\in\{2.5,3,3.5\}$}. On the Power dataset, SE-PrivGEmb$_{DW}$ and SE-PrivGEmb$_{Deg}$ generally outperform other private algorithms across most values of $\epsilon$. All private algorithms yield similar AUC values at \text{$\epsilon=0.5$}, while SE-PrivGEmb$_{DW}$ and SE-PrivGEmb$_{Deg}$ outperform the remaining private algorithms under \text{$\epsilon\in\{1, 1.5, 2, 2.5, 3, 3.5\}$}. For the Arxiv dataset, SE-PrivGEmb$_{DW}$ and SE-PrivGEmb$_{Deg}$ outperform the other private algorithms for all values of $\epsilon$. It is worth noting that DPGGAN, GAP, and ProGAP show comparable performance, whereas DPGVAE consistently underperforms compared to the other private algorithms. In summary, our proposed SE-PrivGEmb$_{DW}$ and SE-PrivGEmb$_{Deg}$ exhibit strong performance in maintaining privacy while preserving predictive accuracy, making them promising approaches for link prediction under differential privacy.
\section{Related Work}\label{Related_work}
\noindent\textbf{Deep Learning with DP.}
Unlike traditional privacy-preserving methods~\cite{sweeney2002k, machanavajjhala2007diversity, hu2014private}, differential privacy (DP) and its variant, local differential privacy (LDP), offer strong privacy guarantees and robustness against adversaries with prior knowledge and have been widely applied across various fields~\cite{chaudhuri2011differentially, bassily2014private, zhang2018calm, ye2021beyond, zhang2021privsyn, ye2023stateful, Zhang2023trajectory}, particularly in privacy-preserving deep learning in recent years.
Nevertheless, applying DP to deep learning is challenging due to the complexity of deep neural networks (DNNs).
Previous research~\cite{mcmahan2017learning, song2013stochastic} has proposed incorporating norm-based gradient clipping into the SGD algorithm to mitigate this challenge. This technique limits the influence of individual examples on the gradients and introduces DP mechanisms to perturb the gradients accordingly. By ensuring that each gradient descent step obeys differential privacy, the final output model achieves a level of DP through composability. However, training DNNs using SGD involves many iterative processes, so differentially private training algorithms need to monitor the accumulated privacy loss during the process and terminate when necessary to avoid exceeding the privacy budget. 
Research by Abadi \emph{et al.}~\cite{abadi2016deep} has shown that existing strong composition theorems~\cite{dwork2010boosting} lack a precise analysis of DP. To overcome this limitation, MA was introduced. This method tracks the logarithmic moments of privacy loss variables and provides more accurate estimates of privacy loss when combining Gaussian mechanisms under random sampling. Additionally, Mironov \emph{et al.}~\cite{mironov2019r} propose a novel analysis method based on RDP to compute privacy accuracy. This method outperforms the MA method and significantly improves the performance of DPSGD.
Further research~\cite{chen2020stochastic, nasr2020improving, papernot2021tempered, tramer2020differentially} explores methods that involve slight modifications to the model structure or learning algorithm. For example, advanced approaches replace traditional activation functions (such as ReLU) in convolutional neural networks (CNNs) with more gentle Sigmoid functions. Alternatively, research~\cite{pichapati2019adaclip, xiang2019differentially, yu2019differentially, fu2024dpsur} focuses on optimizations such as adaptive adjustment of gradient clipping bounds or dynamically partitioning the privacy budget. However, due to the excessive noise introduced by improper privacy budget partitioning in each epoch, DPSGD still struggles to achieve a satisfactory balance between privacy and utility.

\noindent\textbf{Graph Embedding Generation with DP.}
There are two main types of techniques for generating graph embeddings: neural network-based methods and matrix factorization-based methods.
In this paper, we focus on the former category due to the scalability limitations of matrix factorization-based methods when dealing with large graphs.
The introduction of word2vec~\cite{mikolov2013efficient, mikolov2013distributed} has had a significant impact on neural network-based methods, leading to the development of various techniques such as DeepWalk~\cite{perozzi2014deepwalk}, LINE~\cite{tang2015line}, PTE~\cite{tang2015pte}, and node2vec~\cite{grover2016node2vec}.
Building on this foundation, 
Ahuja \emph{et al.}~\cite{ahuja2020differentially} propose a private learning technique for sparse location data that combines skip-gram with differentially private stochastic gradient descent. However, their approach suffers from poor utility due to excessive partitioning of the privacy budget.
Peng \emph{et al.}~\cite{peng2021differentially} introduce a decentralized scalable learning framework that enables privacy-preserving learning of embeddings from multiple knowledge graphs in an asynchronous and peer-to-peer manner.
Han \emph{et al.}~\cite{han2022framework} develop a general framework for adapting knowledge graph embedding generation algorithms into differentially private versions.
Pan \emph{et al.}~\cite{pan2022fedwalk} present a federated unsupervised graph embedding generation that robustly captures graph structures, ensures DP, and operates with high communication efficiency. However, these studies, including the perturbation mechanisms, face challenges in maintaining utility similar to Ahuja \emph{et al.}~\cite{ahuja2020differentially}. Yang \emph{et al.}~\cite{yang2020secure} propose differentially private GAN and differentially private VAE models for graph synthesis and link prediction. However, despite the low sensitivity of this approach, it tends to converge prematurely under MA, particularly with a limited privacy budget, resulting in diminished performance in both privacy and utility.
Another area of research~\cite{olatunji2021releasing, daigavane2021node, zhang2022towards, sajadmanesh2023gap, sajadmanesh2023progap, xiang2023preserving} within the field of differential private graph learning focuses on GNNs.
The aggregation perturbation (AP) technique is commonly employed as a leading method for ensuring DP in GNNs. Unlike traditional DPSGD algorithms, most existing differentially private GNN methods perturb aggregate information obtained from the GNN neighborhood aggregation step. However, AP encounters compatibility issues with standard GNN architectures, necessitating the re-perturbation of all aggregate outputs at each training iteration, resulting in high privacy costs. 
Furthermore, none of all the existing methods can offer structure-preference settings. Setting preferences is essential for extracting specific graph structures that align with mining objectives, improve predictive accuracy, and yield meaningful insights.

\section{Conclusion}\label{ConcluSection}
In this paper, we have presented SE-PrivGEmb, a structure-preference enabled graph embedding generation under differential privacy using skip-gram. There are two technical highlights. 
First, we deeply analyze the optimization of skip-gram and design a noise tolerance mechanism via non-zero vector perturbation.
This mechanism accommodates arbitrary structure preferences and mitigates utility degradation caused by high sensitivity.
Second, by carefully designing negative sampling probabilities in skip-gram, we prove theoretically that skip-gram can preserve arbitrary node proximities, thereby aligning with mining objectives, improve predictive accuracy, and yield meaningful insights.
Through privacy analysis, we prove that the published low-dimensional node vectors satisfy node-level RDP. Extensive experiments on six real-world graph datasets show that our solution substantially outperforms state-of-the-art competitors. 
For future work, we plan to extend our method to attribute graphs that include billions of nodes or edges. Notably, the attributes associated with the nodes are independent and can be easily managed due to their low sensitivity.
Additionally, inspired by the dynamic graph embedding discussed in~\cite{du2018dynamic}, we also plan to extend our method to dynamic graph embedding while obeying differential privacy. Addressing dynamic graphs will face two significant challenges: allocating privacy budgets to each data element at each version and managing noise accumulation during continuous data publishing. 

\section*{Acknowledgments}
This work was supported by the National Natural Science Foundation of China (Grant No: 92270123, 62072390, and 62372122), and the Research Grants Council, Hong Kong SAR, China (Grant No: 15203120, 15226221, 15209922, 15208923 and 15210023).

\bibliographystyle{IEEEtran}
\bibliography{mybibfile}
\end{document}